\newcolumntype{L}[1]{>{\raggedright\let\newline\\\arraybackslash\hspace{0pt}}m{#1}}
\newcolumntype{C}[1]{>{\centering\let\newline\\\arraybackslash\hspace{0pt}}m{#1}}
\newcolumntype{R}[1]{>{\raggedleft\let\newline\\\arraybackslash\hspace{0pt}}m{#1}}
\definecolor{myurlcolor}{rgb}{0,0,0.9}
\newcommand{\be}{\begin{equation}}
\newcommand{\ee}{\end{equation}}
\newcommand{\beq}{\begin{eqnarray}}
\newcommand{\eeq}{\end{eqnarray}}
\newcommand{\beqs}{\begin{eqnarray*}}
\newcommand{\eeqs}{\end{eqnarray*}}
\newcommand{\norm}[1]{\left\lVert #1 \right\rVert}
\theoremstyle{plain}
\newtheorem{thm}{Theorem}
\newtheorem{lem}[thm]{Lemma}
\newtheorem{prop}[thm]{Proposition}
\newtheorem{cor}[thm]{Corollary}
\theoremstyle{definition}
\newtheorem{Def}[thm]{Definition}
\tikzstyle WL=[line width=10pt,opacity=1.0]
\tikzstyle 5WL=[line width=5pt,opacity=1.0]
\tikzstyle 1WL=[line width=1pt,opacity=1.0]
\newcommand*{\myproofname}{Proof}
\def\real{\mathbb{R}}
\begin{document}

\title{Depth-Width Trade-offs for Neural Networks via Topological Entropy}
\author{Kaifeng Bu$^1$}
\email{kfbu@fas.harvard.edu (K.Bu)}
\author{Yaobo Zhang$^{2,3}$}
\email{yaobozhang@zju.edu.cn (Y.Zhang)}
\author{Qingxian Luo$^{4,5}$}
\email{luoqingxian@zju.edu.cn(Q.Luo)}

\address[$1$]{Department of Physics, Harvard University, Cambridge, Massachusetts 02138, USA}

\address[$2$]{Zhejiang Institute of Modern Physics, Zhejiang University, Hangzhou, Zhejiang 310027, China}
\address[$3$]{Department of Physics, Zhejiang University, Hangzhou Zhejiang 310027, China}

\address[$4$]{School of Mathematical Sciences, Zhejiang University, Hangzhou, Zhejiang 310027, China}
\address[$5$]{Center for Data Science, Zhejiang University, Hangzhou Zhejiang 310027, China}

\begin{abstract}

One of the central problems in the study of deep learning theory
 is to understand how the structure properties, such as depth, width and  the number of nodes,
 affect the  expressivity of deep neural networks. 
 In this work, we show a new connection between  the expressivity of deep neural networks and topological entropy
 from dynamical system, which can be used to  characterize depth-width trade-offs  of  neural networks. 
We provide an upper bound on the topological entropy of neural 
networks with continuous semi-algebraic units by the structure parameters. Specifically,  
the topological entropy of ReLU network with $l$ layers and $m$ nodes per layer is upper bounded by $O(l\log m)$. 
 Besides, if the neural network is a good approximation of some function $f$, then 
the size of the neural network has an exponential lower bound with respect to the topological 
entropy of $f$.
 Moreover, we  discuss the relationship between topological entropy,
the number of oscillations, periods and Lipschitz constant.

\end{abstract}

\maketitle

\section{Introduction}

Deep neural network has been a hot topic in machine learning, which has lots of applications ranging from 
pattern recognition to computer vision. 
Understanding the representation power  of neural network is one of the key problems in 
deep learning theory. Universal approximation theorem tells us that 
any continuous function can be approximated by a depth-2 neural network with some 
activation function on a bounded domain \cite{Cybenko1989,Hornik1989,Funahashi1989,Barron1994}. However, the size of the neural 
network in this approximation can be exponential which is impractical in real life. 
Hence, we are interested in the neural networks with bounded size. 

One natural question is to investigate the trade-offs between depth and width. The benefits of 
depths on the representational power of neural networks has attracted lots of attention, and there are many 
results based on the depth separation argument \cite{Eldan16,Telgarsky2015,Telgarsky2016,Schmitt1999,Montufar2014,Malach2019,Poole2016,Raghu17,Arora2016,Liang2016,Kileel2019}.
Depth separation argument has also been considered in other
computational models, such as boolean circuits \cite{Hastad1986,Hastad87, Parberry94,RossmanFOCS15}
and sum-product networks \cite{Shawe11,Martens2014}.
 To get a depth separation argument for neural networks, 
several measures to quantify the complexity of the functions have been
introduced, such as the number of 
linear regions \cite{Montufar2014},  Fourier spectrum \cite{Eldan16},
global curvature \cite{Poole2016},  
trajectory length \cite{Raghu17},  fractals \cite{Malach2019} and so on.

Recently, Telgarsky used the number of oscillations as a measure of 
the complexity of function to prove that there exist neural networks with 
$\theta(k^3)$ layers, $\theta(1)$ nodes per layer which can not be 
approximated by networks with $O(k)$ layers and $O(2^k)$ nodes \cite{Telgarsky2016}. 
Moreover, Chatziafratis et al provided a connection between 
the representation power of neural networks and the periods 
of the function by the well-known Sharkovsky's Theorem \cite{Chatziafratis2019}. 
Furthermore, by revealing a tighter connection between 
periods, Lipschitz constant and the  number of oscillations, 
Chatziafratis et al gave an  improved depth-width trade-offs \cite{Chatziafratis2020}. 

In this work, we show the connection between 
the representation power of neural networks and topological entropy,
a well-known concept in dynamic system to quantify the complexity of 
the system. 
First, we provide an upper bound on the topological entropy of neural 
networks with semi-algebraic units by the structure parameters like depth and 
width. For example, for the ReLU network with $l$ layers and $m$ nodes per layer, 
the topological entropy is upper bounded by $O(l\log m)$. 
Besides, if the neural network is a good approximation of  some function $f$, then 
the size has an exponential lower bound with respect to the topological 
entropy of $f$. Furthermore, we discuss the connection between topological entropy,
number of oscillations, periods and Lipschitz constant.

\section{Preliminaries}
\subsection{Background about dynamic system}
In this subsection, we will introduce some basic facts about one-dimensional dynamic system. 
First, let us introduce the definition of 
topological entropy. Topological entropy of a dynamic system
quantifies the complexity of the system, such as the  number of 
 different  orbits and the sensitivity of  evolution on the initial states. 
There are several equivalent definitions  of topological 
entropy. Here we take the one introduced by Adler, Konheim, and McAndrew \cite{Adler65}.

Let $X$ be a compact Hausdorff space, $f$ be a continuous map from 
$X$ to $X$. 
Given a set $\mathcal{A}$ of subsets of $X$,  if their union is $X$,
then $\mathcal{A}$  is called a cover of $X$. If each element in $\mathcal{A}$ is an open set, then $\mathcal{A}$
is called an open cover of $X$. 
Given open covers $\mathcal{A}_1, \mathcal{A}_2,...,\mathcal{A}_n$ of $X$, we denote $\bigvee^{n}_{i=1}\mathcal{A}_i$ as follows,
\begin{eqnarray*}
\bigvee^{n}_{i=1}\mathcal{A}_i:=
\set{
A_1\cap A_2...\cap A_n:
A_i\in \mathcal{A}_i , \forall i, ~\text{and}~~
A_1\cap A_2...\cap A_n\neq\emptyset
}.
\end{eqnarray*} 

Given an open cover 
$\mathcal{A}$, we can define the open 
cover $f^{-i}(\mathcal{A})$ and $\mathcal{A}^n_{f}$ as follows
\begin{eqnarray*}
f^{-i}(\mathcal{A})&:=&\set{f^{-i}(A): A\in \mathcal{A}},\\
\mathcal{A}^n_{f}&=&\bigvee^{n-1}_{i=0}f^{-i}(\mathcal{A}).
\end{eqnarray*}
Let us denote 
$\mathcal{N}(\mathcal{A})$ to be  the minimal cardinality of 
the subcover from $\mathcal{A}$. Mathematically,  $\mathcal{N}(\mathcal{A})$ can be defined as follows
\begin{eqnarray*}
\mathcal{N}(\mathcal{A})
=\min\set{Card(\mathcal{B}): \mathcal{B}\subset \mathcal{A} ~~\text{and} ~~\mathcal{B}~~ \text{is a cover of X}},
\end{eqnarray*}
where $Card(\mathcal{B})$ denotes the cardinality of $\mathcal{B}$.

Now, we are ready to define topological entropy. 

\begin{Def}\cite{Adler65}
Given a compact Hausdorff topological space $X$, and a continuous map $f:X\to X$, 
for an open cover $\mathcal{A}$,  the topological entropy of $f$
 on the cover $\mathcal{A}$ is defined as
 \begin{eqnarray*}
h_{top}(f,\mathcal{A})
=\lim_{n\to \infty}
\frac{1}{n}\log_2 \mathcal{N}(\mathcal{A}^{n}_f).
\end{eqnarray*}
The topological entropy of $f$ is defined as
\begin{eqnarray}
h_{top}(f)=\sup_{\mathcal{A}:~ \text{open cover of X}}
h_{top}(f,\mathcal{A}).
\end{eqnarray}

\end{Def}

The topological entropy takes value from $[0, +\infty]$.  (See Fig \ref{fig:infty} for the examples of functions with finite and infinite 
topological entropy.)
Topological entropy has some nice properties, which we have listed in the Appendix \ref{sec:top}.  
In this work, we  consider the case where $X$ is a closed interval $[a, b]$ and 
$f$ is a continuous function from $[a, b]$ to $[a,b]$.
For such interval map, topological entropy has several nice 
characterization. In this work, we will use the following one.
 We list other characterizations in  Appendix \ref{sec:top}.

 \begin{figure}[h]
  \center{
  \subfigure[]{
\begin{minipage}[t]{0.5\linewidth}
\centering
\includegraphics[width=5.5cm]{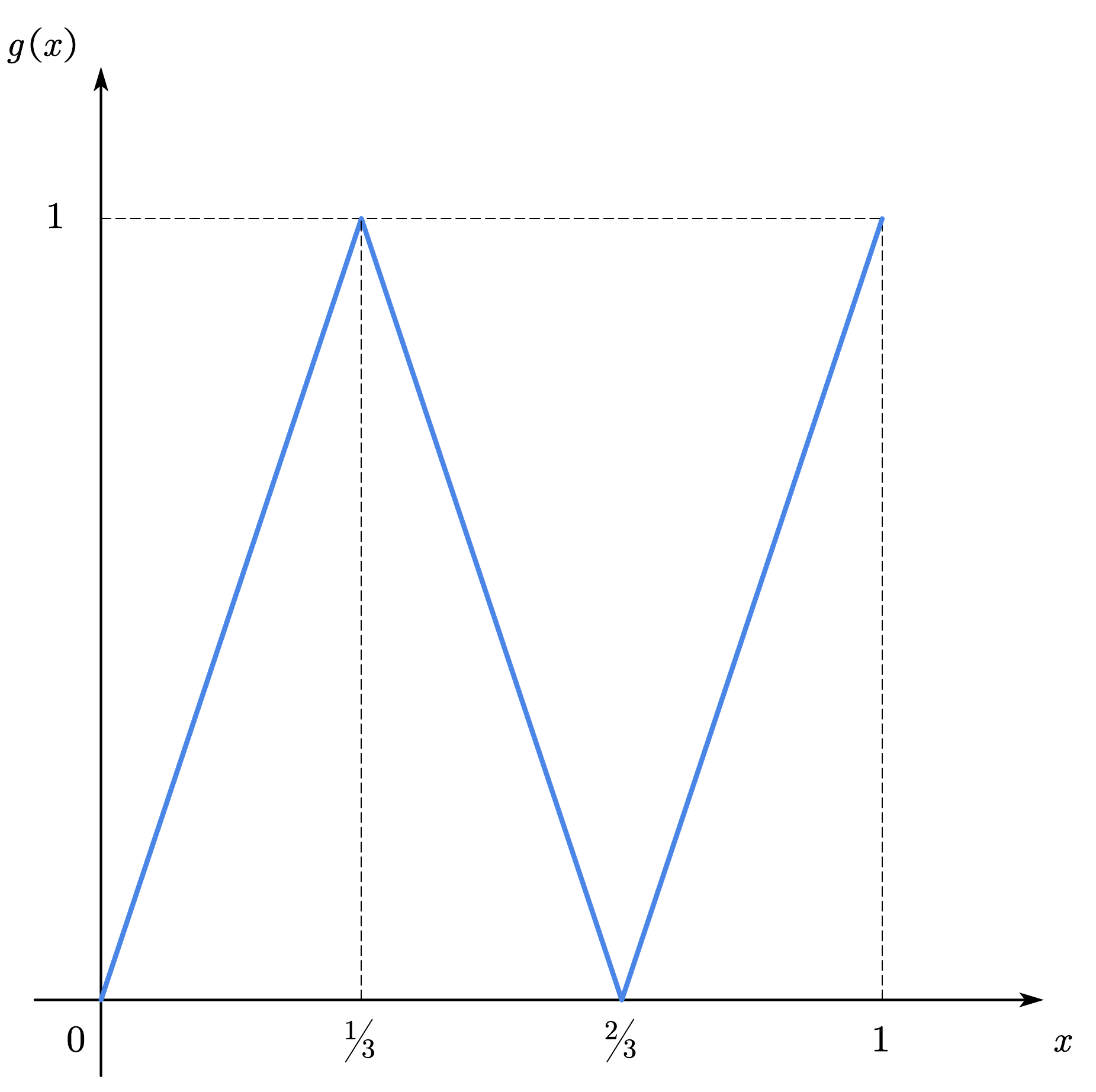}
\end{minipage}%
}%
\subfigure[]{
\begin{minipage}[t]{0.5\linewidth}
\centering
\includegraphics[width=5.5cm]{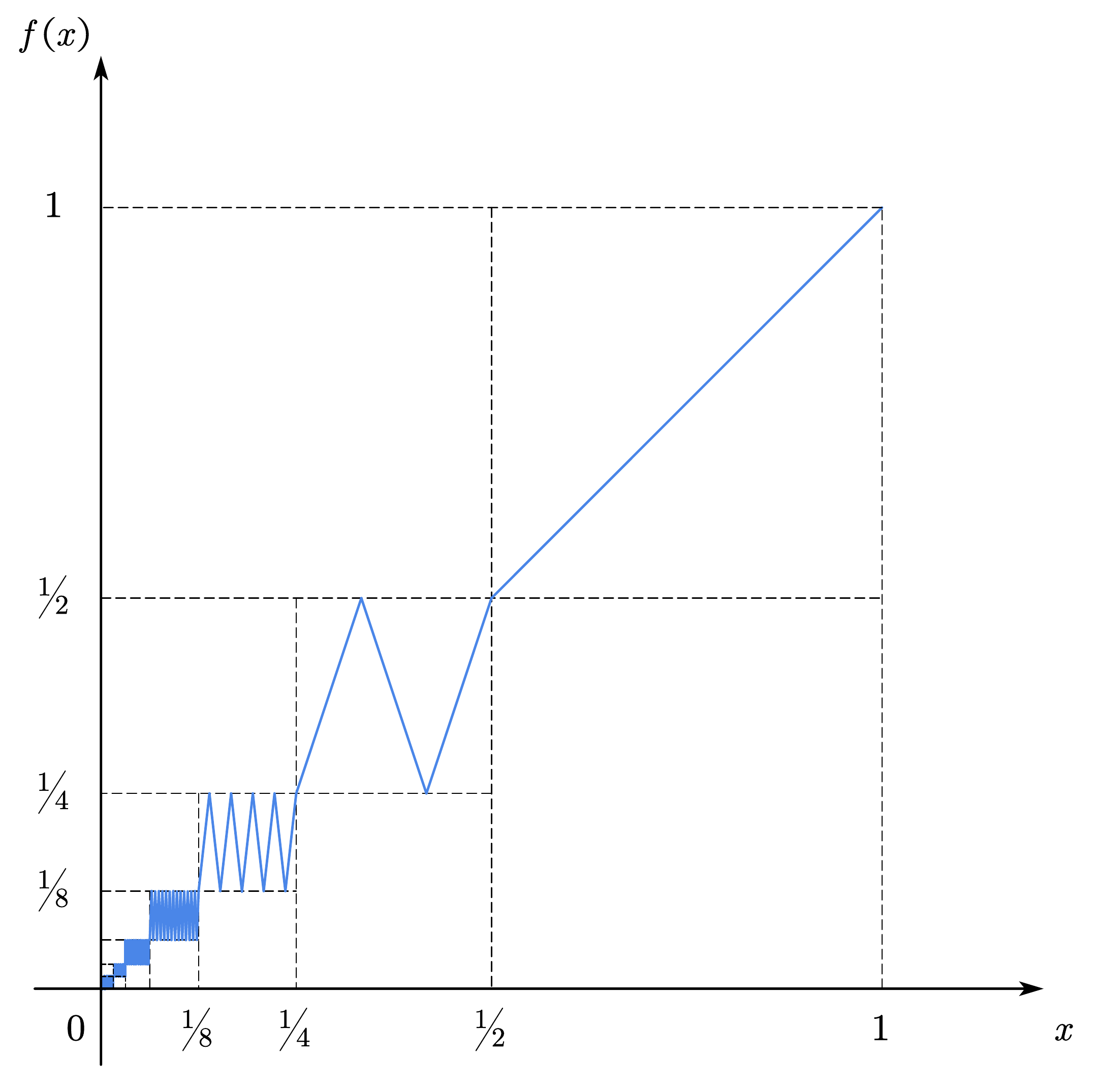}
\end{minipage}%
}%
  }     
  \caption{Examples of functions with finite and infinite topological entropy.
(a) $g:[0,1]\to [0,1]$ with $h_{top}(g)=3$;
(b) $f:[0,1]\to [0,1]$ with $h_{top}(f)=+\infty$, where $f$ is conjugate to $g^n$ on the interval $[2^{-(n-1)}, 2^{-n}]$ for each 
integer $n\geq 0$ and $f(0)=0$. (See the definition of conjugacy in Appendix \ref{sec:top}.)}
  \label{fig:infty}
 \end{figure}

\begin{Def}
A continuous function $f:[a,b]\to [a,b]$ is  piece-wise monotone, 
if there exists a finite partition of $[a,b]$ such that $f$ is monotone on each piece. Let us denote $c(f)$ 
to be minimal number of monotonicity of $f$.
\end{Def}

\begin{lem}\label{lem:monh}\cite{Misiurewicz80,Young81}
If the continuous function $f:[a,b]\to [a,b]$ is piecewise monotone, then 
\begin{eqnarray*}
h_{top}(f)
=\lim_{n\to \infty}
\frac{1}{k}\log c(f^k)=\inf_k \frac{1}{k}\log c(f^k),
\end{eqnarray*}
where $c(f)$ is the number of intervals of monotonicity of $f$.
\end{lem}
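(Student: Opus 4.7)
The plan is to reduce the claimed identity to two independent facts: sub-multiplicativity of the lap count $c(f^k)$, which yields existence of the limit and its equality with the infimum via Fekete's subadditive lemma, and an equivalence of exponential growth rates between $c(f^k)$ and the cover-entropy $\mathcal{N}(\mathcal{A}^n_f)$ for suitably chosen open covers $\mathcal{A}$.

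First I would prove $c(f^{m+n}) \leq c(f^m)\,c(f^n)$. On each of the at most $c(f^n)$ monotonicity intervals $J$ of $f^n$, the restriction $f^n|_J$ is a homeomorphism onto $f^n(J)$, and the monotonicity intervals of $f^{m+n}|_J = f^m \circ (f^n|_J)$ are precisely the pre-images under $f^n|_J$ of the monotonicity intervals of $f^m$ that meet $f^n(J)$; there are at most $c(f^m)$ of these, so summing over $J$ gives the bound. Applying Fekete's lemma to $a_k = \log c(f^k)$ then yields $\lim_{k\to\infty} \frac{1}{k}\log c(f^k) = \inf_k \frac{1}{k}\log c(f^k)$.

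Next I would establish both inequalities against $h_{top}(f)$. For the lower bound $h_{top}(f) \geq \lim_k \frac{1}{k}\log c(f^k)$, take $\mathcal{P}_\epsilon$ to be the open cover obtained by $\epsilon$-thickening each of the $c(f)$ monotonicity intervals of $f$. An induction on $n$ shows that the nonempty atoms of $(\mathcal{P}_\epsilon)^n_f = \bigvee_{i=0}^{n-1} f^{-i}(\mathcal{P}_\epsilon)$ are in bijection, up to a multiplicative factor independent of $n$, with the monotonicity intervals of $f^n$, so $\mathcal{N}((\mathcal{P}_\epsilon)^n_f)$ grows like $c(f^n)$ and $h_{top}(f,\mathcal{P}_\epsilon) = \lim_k \frac{1}{k}\log c(f^k)$. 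For the reverse inequality, take any open cover $\mathcal{B}$, pick a Lebesgue number $\delta > 0$, and refine $\mathcal{B}$ by a partition $\mathcal{Q}$ of $[a,b]$ into intervals of length less than $\delta$, so $h_{top}(f,\mathcal{B}) \leq h_{top}(f,\mathcal{Q})$. Each of the $|\mathcal{Q}|-1$ boundary points of $\mathcal{Q}$ has at most $c(f^i)$ pre-images under the piecewise monotone $f^i$, so the number of atoms of $\mathcal{Q}^n_f$ is bounded by $O\bigl(|\mathcal{Q}|\sum_{i=0}^{n-1} c(f^i)\bigr)$; since $c(f^i)$ grows exponentially at rate $\lim_k \frac{1}{k}\log c(f^k)$ by sub-multiplicativity, taking $\frac{1}{n}\log$ and $n\to\infty$ gives $h_{top}(f,\mathcal{B}) \leq \lim_k \frac{1}{k}\log c(f^k)$.

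The main obstacle is the combinatorial matching in the lower bound: one must verify that an $n$-step itinerary through the thickened cover $\mathcal{P}_\epsilon$ determines the monotonicity piece of $f^n$ containing a point up to a bounded ambiguity, and that the $\epsilon$-boundary overhead contributes only a multiplicative $O(1)$ factor to $\mathcal{N}((\mathcal{P}_\epsilon)^n_f)$ that vanishes under the $\frac{1}{n}$ normalization as $n\to\infty$. The sub-multiplicativity step and the Lebesgue-number refinement in the upper bound are comparatively routine.
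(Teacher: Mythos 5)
You should first be aware that the paper offers no proof of this lemma at all: it is imported from Misiurewicz--Szlenk and Young via citation, and the only consequence the paper actually uses downstream is the one-sided bound $h_{top}(f)\le \log_2 c(f)$ in the proof of Theorem~\ref{thm:main1}. So your proposal is an attempted reconstruction of a cited theorem rather than a variant of an argument appearing in the paper. Within that reconstruction, two of your three steps are sound: the submultiplicativity $c(f^{m+n})\le c(f^m)\,c(f^n)$ together with Fekete's lemma correctly gives $\lim_k\frac1k\log c(f^k)=\inf_k\frac1k\log c(f^k)=:L$, and your ``reverse inequality'' $h_{top}(f)\le L$ (Lebesgue number, refinement by an interval partition $\mathcal{Q}$, and the bound $|f^{-i}(x)|\le c(f^i)$ on the preimages of the finitely many endpoints of $\mathcal{Q}$) is essentially the standard argument and works.

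The gap is in the direction $h_{top}(f)\ge L$, precisely at the spot you flag as ``the main obstacle'': the claim that an $n$-step itinerary through the thickened cover $\mathcal{P}_\epsilon$ determines the monotonicity piece of $f^n$ up to bounded ambiguity is not justified and is false in the generality you need. Each thickened lap $U_j$ necessarily contains the turning points at both of its ends, so knowing $f^i(x)\in U_{j_i}$ for all $i$ does not record on which side of those turning points the orbit passes. Quantitatively, if $I$ is a connected component of $\bigcap_{i=0}^{n-1}f^{-i}(U_{j_i})$, then $f^i(I)\subseteq U_{j_i}$ and $U_{j_i}$ contains at most two turning points of $f$, so the number of laps of $f^{i+1}|_I$ is at most three times that of $f^i|_I$; a single element of $(\mathcal{P}_\epsilon)^n_f$ can therefore meet up to $3^n$ laps of $f^n$, and the resulting estimate $\mathcal{N}((\mathcal{P}_\epsilon)^n_f)\ge c(f^n)/3^n$ is vacuous whenever $L<\log_2 3$. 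This is not bookkeeping that can be absorbed into a multiplicative constant; it is the actual content of the Misiurewicz--Szlenk theorem. The standard proofs sidestep it by computing the entropy with covers by closed intervals with pairwise disjoint interiors (whose iterated joins consist of genuine intervals of monotonicity, in controlled correspondence with the laps of $f^n$), together with a separate lemma that such non-open covers still compute $h_{top}$, or by routing the lower bound through $(n,\epsilon)$-separated sets or the variation $Var(f^k)$ (compare Lemma~\ref{lem:var} in the appendix). To complete your write-up you would need to supply one of these mechanisms; the $\epsilon$-thickening on its own does not close the argument.
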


Now let us introduce the definition of periods in the dynamical system.

\begin{Def}
A continuous function $f:[a,b]\to [a,b]$ has a point of period $n$ if 
there exists $x_0\in [a,b]$ such that 
\begin{eqnarray*}
f^n(x_0)&=&x_0,\\
f^i(x_0)&\neq& x_0,~~ \forall 1\leq i\leq n-1.
\end{eqnarray*}
The set $\set{x_0, f(x_0),...,f^{n-1}(x_0)}$ is called 
a $n$-cycle of $f$.
\end{Def}

There is a well-known theorem called 
Sharkovsky's Theorem, which describes the structure of the periods of cycles
of the interval map. 

\begin{Def}[Sharkovsky's ordering]
Let us define Sharkovsky ordering as follows
\begin{eqnarray*}
&&3\vartriangleright 5 \vartriangleright7 \vartriangleright\cdots\vartriangleright\\
&\vartriangleright& 3\cdot 2\vartriangleright 5\cdot 2 \vartriangleright 7\cdot 2 \vartriangleright \cdots\vartriangleright\\
&\vartriangleright& 3\cdot 2^2\vartriangleright 5\cdot 2^2 \vartriangleright 7\cdot 2^2 \vartriangleright\cdots\vartriangleright\\
&&~~~~~~~~~~~~~~~~\vdots\\
&\vartriangleright& 3\cdot 2^n\vartriangleright 5\cdot 2^n \vartriangleright7\cdot 2^n \vartriangleright \cdots\vartriangleright\\
&&~~~~~~~~~~~~~~~~\vdots\\
&\vartriangleright& \cdots\vartriangleright  2^3 \vartriangleright 2^2 \vartriangleright 2\vartriangleright 1\\
\end{eqnarray*}
\end{Def}

Let us define $Per(f)$ to be the set of periods of cycles of a map $f:[a, b]\to [a, b]$ and denote $\mathbb{N}_{sh}=\mathbb{N}\cup \set{2^{\infty}}$.
Sharkovsky's Theorem tells us that  Sharkovsky's  ordering can be 
 used to characterize the  periods  of a continuous function  as follows.

\begin{thm}\cite{Sharkovsky64,Sharkovsky65}
Given a continuous function $f:[a, b]\to [a, b]$, there exists
$s\in \mathbb{N}_{sh}$ such that
$Per(f)=\set{k\in\mathbb{N}: s\vartriangleright  k}$. Conversely,
for any $s\in \mathbb{N}_{sh}$, there exists a continuous function 
$f:[a,b]\to [a,b]$ such that
$Per(f)=\set{k\in\mathbb{N}: s\vartriangleright  k}$.
\end{thm}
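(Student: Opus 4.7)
The plan is to split the theorem into two directions. The forward direction asserts that if $f$ has a cycle of period $n$, then it has cycles of every period $m$ with $n \vartriangleright m$, while the converse requires, for each $s \in \mathbb{N}_{sh}$, constructing a continuous map whose period set is exactly $\{k \in \mathbb{N}: s \vartriangleright k\}$.

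For the forward direction, the main tool is a \emph{covering lemma} obtained from the intermediate value theorem: if $I,J$ are compact subintervals of $[a,b]$ with $J\subseteq f(I)$, then there is a compact subinterval $I'\subseteq I$ with $f(I')=J$. Writing $I\Rightarrow J$ when $f(I)\supseteq J$, this yields two key consequences. First, an \emph{admissible loop} $I_0\Rightarrow I_1\Rightarrow\cdots\Rightarrow I_{k-1}\Rightarrow I_0$ of length $k$ with the $I_j$ essentially disjoint produces, via repeated application of the lemma and the fact that $I\Rightarrow I$ forces a fixed point of $f|_I$ (applying IVT to $f(x)-x$), a genuine period-$k$ point of $f$. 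Second, given a period-$n$ cycle $x_1<x_2<\cdots<x_n$ I consider the $n-1$ intervals $I_i=[x_i,x_{i+1}]$ together with the directed \emph{Markov graph} whose arcs $I_i\to I_j$ record the covering relation induced by $f$.

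The heart of the argument is a combinatorial analysis of this Markov graph. When $n$ is odd and $n\geq 3$, one shows (by tracking where the cycle's left/right endpoints are permuted) that the graph contains the so-called \emph{\v{S}tefan subgraph}: a single vertex with a self-loop plus a cycle of length $n$ through it, from which admissible loops of every length $m$ with $n\vartriangleright m$ can be extracted. For a general $n=q\cdot 2^r$ with $q$ odd one bootstraps this: if $q\geq 3$, passing to the iterate $f^{2^r}$ produces an odd period-$q$ cycle and the preceding argument yields all $q'\cdot 2^r$-cycles with $q\vartriangleright q'$; then one reinserts the smaller $2$-power periods by a renormalization argument that restricts $f^{2^j}$ to an invariant subinterval. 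The purely $2^k$-power tail is handled similarly by an inductive renormalization on $f^2$.

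For the converse direction, I exhibit explicit piecewise linear models. A monotone map realizes $s=1$; a map with a single turning point and no periodic orbit besides a $2$-cycle realizes $s=2$; the tent map (and truncations thereof) realizes $s=3$, i.e. all periods. For intermediate tails one uses \v{S}tefan's piecewise linear construction with slopes calibrated so that the only admissible Markov loops are the desired ones, and for $s=2^\infty$ one glues together a nested sequence of renormalized tent-like pieces converging to a Feigenbaum-type map. The main obstacle is the Markov-graph analysis in the odd case: showing precisely that the cyclic permutation induced on $\{x_1,\dots,x_n\}$ forces a \v{S}tefan subgraph demands a careful case analysis of how the orbit straddles its unique fixed point, and this combinatorial core is where almost all the work of the theorem lies.
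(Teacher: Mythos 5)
The paper does not prove this statement at all: it is quoted as a classical result with a citation to Sharkovsky's original papers, so there is no internal proof to compare against. Your outline is the standard modern proof (essentially the one in Alsed\`a--Llibre--Misiurewicz, or Block--Guckenheimer--Misiurewicz--Young): the covering lemma from the intermediate value theorem, the Markov graph of a periodic orbit, the \v{S}tefan subgraph for odd periods, renormalization/doubling for periods $q\cdot 2^r$ and for the pure power-of-two tail, and explicit piecewise-linear models (truncated tent maps, \v{S}tefan's maps, a Feigenbaum-type limit for $s=2^\infty$) for the converse. As a strategy this is correct and is the route any self-contained proof would take.

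As written, however, it is a roadmap rather than a proof, and two points you only gesture at are genuine work. First, the step ``an admissible loop of length $k$ produces a genuine period-$k$ point'' needs care: following the loop yields a point $x$ with $f^k(x)=x$ whose orbit visits the prescribed intervals in the prescribed order, but to conclude that the \emph{least} period of $x$ is $k$ you must (i) choose the loop so that its itinerary is not the repetition of a shorter loop, and (ii) rule out the degenerate case where $x$ is a common endpoint of two of the intervals $I_i$ (these endpoints belong to the original cycle, whose period is already known and need not divide $k$). Second, the combinatorial heart---that an odd-period cycle forces a \v{S}tefan subgraph of the Markov graph---and the verification that the $2^\infty$ limit map admits no period outside $\{1,2,4,8,\dots\}$ are exactly where the content of the theorem lives; you correctly identify this but do not supply it. So the proposal should be read as a correct plan with the decisive lemmas still owed, which is acceptable here only because the paper itself treats the theorem as a black-box citation.
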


Next, let us give the definition of crossings (or oscillations), where the relationship between
the number of crossings and periods has been considered in \cite{Chatziafratis2019,Chatziafratis2020}.
\begin{Def}
Given a continuous function $f:[a,b]\to [a,b]$, for any $[x,y]\subset[a,b]$, 
$f$  crosses $[x,y]$ if there exists $c,d\in[a,b]$ such that 
$f(c)=x$, $f(d)=y$.  We use $C_{x,y}(f)$ to denote the number that $f$ crosses $[x,y]$, which means
there exists $c_1,d_1<c_2,d_2<...<c_t,d_t$ with $t=C_{x,y}(f)$ such that $f(c_i)=x, f(d_i)=y$ for any $1\leq i\leq C_{x,y}(f)$.
\end{Def}

Finally, let us introduce the concept called $f$-covering  \cite{Alseda00}.
\begin{Def}[$f$-covering]
Given a continuous function $f:[a,b]\to [a,b]$ and two intervals $I_1,I_2\subset [a,b]$, 
we say that $I_1$ $f$-covers $I_2$ 
if there exists a subinterval $J$ of $I_1$  such that 
$f(J)=I_2$. Besides,  we say that $I_1$ $f$-covers $I_2$ $t$ times if there 
exists $t$ subintervals $J_1,..,J_t $ of $I_1$ with pairwise disjoint interior such that 
$f(J_i)=I_2$ for $i=1,...,t$.
\end{Def}
Based on the definitions of crossing and $f$-covering, it is easy to see that 
$C_{x,y}(f)=t$ iff the maximal times that $[a,b]$ $f$-covers $[x,y]$ is equal to $t$.

\subsection{Neural networks with semi-algebraic units}
A neural network is a function defined by a connected directed graph with 
some activation function $\sigma: \real\to\real $ and a set of parameters:
a weight for each edge and a bias for each node of the graph. 
Usually the activation function   $\sigma: \real\to\real $ is a nonlinear function. 
The root nodes do the computation on the input vector, while the 
internal nodes do the computation on the output from other nodes.
The activation function for nodes may be different, and there are two common choices:
(1) ReLU gate: $\vec{x}\to \sigma_R(\langle\vec{a},\vec{x}\rangle+b)$, where $\sigma_R(x)=\max\set{0,x}$;
(2) maximaization gate $Max$: $\vec{x}\to \max^n_{i=1} x_i$.

Here we consider an important class of activation functions, called semi-algebraic units (or semi-algebraic gates)\cite{Telgarsky2016}. 
The definition of a semi-algebraic gate is given as follows
\begin{Def}
A function $\sigma: \real^n \to \real$ is called $(t, d_1, d_2)$ semi-algebraic, if there exists 
$t$ polynomials $\set{p_i}^{t}_{i=1}$ of degree $\leq d_1$ and 
$s$ tripes $(L_j, U_j, q_j)^{s}_{j=1}$ 
where $L_i$ and $U_i$ are subsets of $\set{1,2,....,t}$, and 
each $q_j$ is a polynomial of degree $\leq d_2$ 
such that 
\begin{eqnarray}
f(\vec{x})
=\sum^{s}_{j=1}
q_j(\vec{x}) \left(\Pi_{i\in L_j}
\mathbb{I}(p_i(\vec{x})<0)\right)
\left(\Pi_{i\in U_j}
\mathbb{I}(p_i(\vec{x})<0)\right),
\end{eqnarray}
where $\mathbb{I}(\cdot)$ is the indicator function.

\end{Def}

Here, we are interested in the continuous  semi-algebraic unit, that is the function $\sigma: \real^n \to \real$ is continuous
and semi-algebraic. 
For example,  the standard 
 ReLU gate $\vec{x}\to \sigma_R(\langle\vec{a},\vec{x}\rangle+b)$
 is a continuous and $(1,1,1)$ semi-algebraic unit \cite{Telgarsky2016}.
 The maximization gate $Max: \real^n\to \real$ defined as $Max(\vec{x})=\max^n_{i=1} x_i$
is a continuous and $(n(n-1),1,1)$ semi-algebraic unit \cite{Telgarsky2016}.

\begin{Def}
A function $\sigma:\real \to \real $ is called 
$(t, d)$-poly, if there exists a partition
of $\real$ into $\leq t$ intervals such that $\sigma$
is a polynomial of degree $\leq d$ on each interval.  
\end{Def}

Denote $\mathcal{N}_n(l,m,t,d_1,d_2)$ 
to be the set of neural networks with $\leq l$ layers,
$\leq m$ nodes per layer,  the activation function 
being continuous and $(t, d_1, d_2)$ semi-algebraic and the input dimension being $n$. 
As the function 
$f$ we would like to represent
is a continuous function $f:[a,b]\to [a,b]$, 
we consider the neural networks with 
input dimension being $1$, i.e., $\mathcal{N}_1(l,m,t,d_1,d_2)$.

\section{Informal statement of our main results}

Our first result shows the connection between topological entropy 
and the depth, width of deep neural networks, and provides an upper bound
of the topological entropy of  neural networks with continuous semi-algebraic units by the 
structure parameters.

\begin{thm}[Informal version of Theorem \ref{thm:main1}]
For any neural networks $g$ with 
$l$ layers, $m$ nodes per layer and 
$(t, d_1, d_2)$ semi-algebraic units 
as activation function, 
then 
\begin{eqnarray}
h_{top}(\tau\circ g )\leq l(1+\log_2m+\log_2t+\log_2d_1)
+l^2\log_2 d_2,
\end{eqnarray}
where
 $\tau: \real \to \real$ is defined as 
 follows (See Figure \ref{fig1}.)
\begin{equation}
\tau(x)=\left\{
\begin{array}{c}
a, x \leq 1,\\
x, a\leq x\leq b\\
b, x>b.
\end{array}
\right.
\end{equation}
\end{thm}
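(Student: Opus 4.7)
The plan is to reduce the topological entropy bound to a count of monotonicity intervals and to estimate that count by tracking the piecewise polynomial structure of the network layer by layer. Since the semi-algebraic activation functions are continuous and $\tau$ clips outputs into $[a,b]$, the composition $\tau\circ g$ restricts to a continuous self-map of $[a,b]$. I will first show it is in fact piecewise polynomial, hence piecewise monotone, and then invoke Lemma \ref{lem:monh} with $k=1$ to obtain $h_{top}(\tau\circ g)\leq \log_2 c(\tau\circ g)$.

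Next, I would control the piecewise polynomial structure of each node's scalar output recursively. Let $N_l$ and $D_l$ denote, respectively, an upper bound on the number of polynomial pieces and on the polynomial degree on each piece of any single node's scalar output after $l$ layers, with $N_0=D_0=1$. A node in layer $l+1$ receives $m$ scalar inputs $y_1(s),\dots,y_m(s)$, each piecewise polynomial in the network input $s$ with at most $N_l$ pieces and degree $\leq D_l$; taking the common refinement of the $m$ partitions gives at most $mN_l$ joint pieces, on each of which every test polynomial $p_i(\vec{y}(s))$ of the semi-algebraic gate is a polynomial in $s$ of degree $\leq d_1 D_l$ and therefore contributes at most $d_1 D_l$ new breakpoints. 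Summing over the $t$ test polynomials, and noting that on each resulting sub-piece the output equals some $q_j(\vec{y}(s))$ of degree $\leq d_2$ in $\vec{y}$, yields the recursion
\begin{eqnarray*}
N_{l+1}\leq mN_l(1+td_1 D_l),\qquad D_{l+1}\leq d_2 D_l,
\end{eqnarray*}
and hence $D_l\leq d_2^l$ and $N_l\leq (2mtd_1)^l d_2^{l(l-1)/2}$.

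Finally, since a polynomial of degree $\leq D_l$ is monotone on at most $D_l$ subintervals and post-composition with $\tau$ can only merge pieces (on the subregions where $g(s)\leq a$ or $g(s)\geq b$ the output becomes constant), we obtain $c(\tau\circ g)\leq N_l D_l\leq (2mtd_1)^l d_2^{l(l+1)/2}$; taking $\log_2$ and using $l(l+1)/2\leq l^2$ produces exactly the claimed bound. The main obstacle will be the careful bookkeeping of how pieces compound across layers --- in particular, verifying that the worst-case growth of pieces under composition of a $(t,d_1,d_2)$ semi-algebraic gate with a vector of piecewise polynomial inputs really is captured by the recursion above, and that degenerate configurations (a test polynomial vanishing identically on a joint piece, coinciding breakpoints across coordinates, or boundary effects from the clipping by $\tau$) do not inflate the count beyond what the recursion records.
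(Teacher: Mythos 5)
Your proposal is correct and follows essentially the same route as the paper: bound the piecewise-polynomial complexity of $\tau\circ g$ layer by layer, convert that to a bound on the number of monotonicity intervals $c(\tau\circ g)$ (the paper's Lemma \ref{lem:pm}), and then apply Lemma \ref{lem:monh} with $k=1$ to get $h_{top}(\tau\circ g)\leq \log_2 c(\tau\circ g)$. The only difference is that you rederive the piece/degree recursion for composing a $(t,d_1,d_2)$ semi-algebraic gate with piecewise-polynomial inputs, whereas the paper cites this composition lemma directly from Telgarsky; your recursion matches it up to constants and yields the same (in fact slightly tighter, $l(l+1)/2$ versus $2l^2$) exponent on $d_2$.
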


 \begin{figure}[!h]
  \center{\includegraphics[width=5.5cm]  {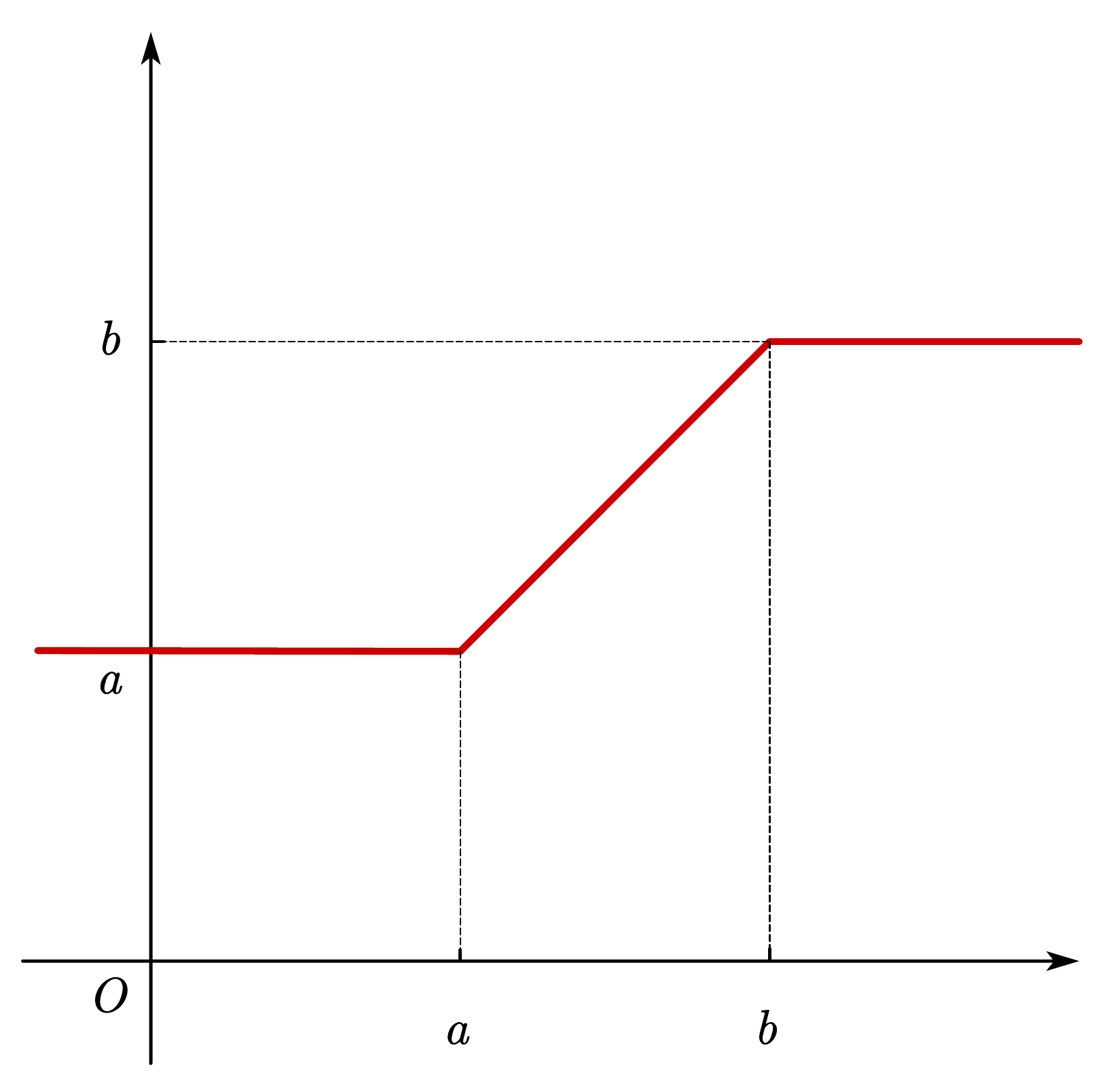}}     
  \caption{The figure for  the function $\tau(x)$.}
  \label{fig1}
 \end{figure}

Our second result shows the connection between the topological entropy of a given function $f$ and the 
depth-width trade-offs required to have a good approximation of $f$.

\begin{thm}[Informal statement of Theorem \ref{thm:main2}]
Given a continuous function
$f:[a,b]\to [a,b]$ with positive and finite topological entropy, 
if  $g$ is a good approximation of $f$ with respect to $\norm{\cdot}_{L^{\infty}}$, where
$g$ is a
neural network  with 
$l$ layers, $m$ nodes per layer and 
$(t, d_1, d_2)$ semi-algebraic units 
as activation function, then we have 
\begin{eqnarray}
m\geq \frac{exp(\Omega(\frac{1}{l}h_{top}(f)))}{2td_1d^l_2}.
\end{eqnarray}
Hence, if the neural network $g$ is a good approximation of $f^k$ with respect to $\norm{\cdot}_{L^{\infty}}$, then we have 
\begin{eqnarray}
m\geq \frac{exp(\Omega(\frac{k}{l}h_{top}(f)))}{2td_1d^l_2}.
\end{eqnarray}
\end{thm}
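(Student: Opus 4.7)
The plan is to sandwich the topological entropy $h_{top}(\tau\circ g)$ between the structural upper bound provided by Theorem~\ref{thm:main1} and a lower bound extracted from the $L^\infty$-proximity of $g$ to $f^k$. Using the standard identity $h_{top}(f^k)=k\cdot h_{top}(f)$, the goal is to show that a sufficiently good $L^\infty$-approximation forces $h_{top}(\tau\circ g)\geq\Omega(k\cdot h_{top}(f))$. Combining this with Theorem~\ref{thm:main1} then gives $k\cdot h_{top}(f)\lesssim l(1+\log_2 m+\log_2 t+\log_2 d_1)+l^2\log_2 d_2$, which rearranges to $m\geq \exp(\Omega(k\cdot h_{top}(f)/l))/(2td_1 d_2^l)$ as claimed.

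The lower bound on $h_{top}(\tau\circ g)$ would proceed in three steps. First, Lemma~\ref{lem:monh} applied to $f^k$ yields $c(f^k)\geq 2^{k\cdot h_{top}(f)}$. Second, I would extract an interval crossed many times: by pigeonholing over the intervals cut out by the consecutive turning values of $f^k$, one finds $[x,y]\subset[a,b]$ with $C_{x,y}(f^k)\geq N=\Omega(c(f^k))$. Third, quantifying ``good approximation'' as $\|g-f^k\|_{L^\infty}<\epsilon$ with $\epsilon<(y-x)/3$, the intermediate-value theorem forces $\tau\circ g$ to cross $[x+\epsilon,y-\epsilon]$ at least $N-O(1)$ times, so $[a,b]$ $(\tau\circ g)$-covers this shrunken subinterval $\Omega(N)$ times in the sense of the $f$-covering definition. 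A standard horseshoe argument — iteratively composing covers — then gives $c((\tau\circ g)^n)\geq N^{n}$ up to lower-order factors, and invoking Lemma~\ref{lem:monh} once more yields $h_{top}(\tau\circ g)\geq \log_2 N-o(1)=\Omega(k\cdot h_{top}(f))$. Combining with Theorem~\ref{thm:main1} and solving for $m$ completes the proof, and the second displayed inequality in the statement follows by replacing $f$ by $f^k$ throughout.

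The hard part will be the perturbation step, since topological entropy is not upper semi-continuous under $L^\infty$ perturbations in general. The threshold $\epsilon$ must be chosen carefully as a function of the local geometry of the extrema of $f^k$ so that the horseshoe of $f^k$ is robust enough to survive for $\tau\circ g$; this is the place where the ``good approximation'' hypothesis has to be made quantitative. A secondary subtlety is that Lemma~\ref{lem:monh} only furnishes many monotone pieces of $f^k$, whereas the crossings argument needs many monotone pieces whose images share a common subinterval — a gap that the pigeonhole over turning values bridges with only a constant-factor loss, so it does not affect the asymptotic dependence on $l$, $m$, $t$, $d_1$, $d_2$.
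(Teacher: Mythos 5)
Your overall architecture is the same as the paper's: sandwich $h_{top}(\tau\circ g)$ between the structural upper bound of Theorem~\ref{thm:main1} and a lower bound of order $h_{top}(f)$ (or $k\,h_{top}(f)$) forced by the $L^{\infty}$-proximity, then solve for $m$ using $h_{top}(f^k)=k\,h_{top}(f)$. The difference is in how the lower bound is obtained. The paper does not prove perturbation stability by hand: it invokes the known lower semi-continuity of topological entropy for interval maps in the $C^0$ topology (Lemma~\ref{lem:semcon}, due to Misiurewicz), which immediately gives $h_{top}(\tau\circ g)\geq \tfrac12 h_{top}(f)$ once $\norm{f-g}_{L^{\infty}}\leq\delta(f)$, using also that $\tau$ is $1$-Lipschitz so $\norm{\tau\circ g-f}_{L^{\infty}}\leq\norm{g-f}_{L^{\infty}}$. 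What you call ``the hard part'' is precisely this cited result; you are attempting to reprove it via crossings.

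Your attempted reproof has a genuine gap at the second step. From $c(f^k)\geq 2^{k\,h_{top}(f)}$ you cannot conclude by pigeonholing over turning values that some interval $[x,y]$ satisfies $C_{x,y}(f^k)=\Omega(c(f^k))$: a ``staircase'' map that rises a lot, dips a little, rises a lot, and so on, has arbitrarily many laps while every subinterval $[x,y]$ is contained in only $O(1)$ lap images, so $C(f)=O(1)$ while $c(f)$ is huge. The correct statement relating crossings to entropy is the asymptotic one proved in the paper's later proposition, $\limsup_k\frac1k\log_2 C(f^k)=h_{top}(f)$, and its lower-bound direction already requires Misiurewicz's horseshoe theorem; moreover it only produces large horseshoes for a subsequence of iterates $f^{k_n}$, not for the particular iterate $f^k$ you are approximating. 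A second, related problem is in your iteration step: to get $c((\tau\circ g)^n)\gtrsim N^n$ you need the $N$ covering subintervals to lie \emph{inside} the covered interval (an actual $s$-horseshoe), whereas the crossing count $C_{x,y}$ allows them to be scattered over all of $[a,b]$, in which case the covers do not compose. Both issues are repaired at once by citing Lemma~\ref{lem:semcon} (or by running your perturbation argument on a genuine horseshoe supplied by Misiurewicz's theorem, accepting the loss to a subsequence of iterates); as written, the proposal does not close.
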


Our third result discusses the connection between the topological entropy, periods, the number of 
oscillations and Lipschitz constant.

\section{ Connection between topological entropy and the size of neural networks}
First, let us consider the topological entropy of 
the neural networks with $l$ layers, $m$ nodes per layer and 
activation function being $(t,d_1,d_2)$ semi-algebraic and continuous, i.e., the functions from
$\mathcal{N}_1(l,m,t,d_1,d_2)$. 
Let us define $\tau:\real\to\real$ as follows

\begin{equation*}
t_2(x)=\left\{
\begin{array}{ccc}
a,& x \leq 1,\\
x,& a\leq x\leq b,\\
b,& x>b.
\end{array}
\right.
\end{equation*}
We can rewrite $\tau(x)$ as follows
\begin{eqnarray}
\tau(x)=
a+
(x-a)\mathbb{I}(x>a)
+(b-x)\mathbb{I}(x>b).
\end{eqnarray}
Hence $\tau$ is continuous and $(2,1,1)$ semi-algebraic.
Therefore, for any $g\in\mathcal{N}_1(l,m,t,d_1,d_2) $, 
the function $\tau\circ g$ is a continuous function 
from $[a,b]$ to $[a,b]$. Thus, we can compute the 
topological entropy of 
 $\tau\circ g$.

To get an upper bound on the topological entropy of neural networks, we first 
need the following lemma, which gives an upper bound on the  number of intervals of monotonicity of $f$.

\begin{lem}\label{lem:pm}
If the function
$ f :[a,b]\to [a,b]$ is continuous and $(t,d)$-poly, we have 
\begin{eqnarray}
c(f)\leq td.
\end{eqnarray}
\end{lem}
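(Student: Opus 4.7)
The plan is to use the $(t,d)$-poly structure directly: partition $[a,b]$ into at most $t$ sub-intervals on which $f$ agrees with a polynomial of degree at most $d$, and then bound the monotonicity of each polynomial piece separately.

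First I would fix a partition $a = a_0 < a_1 < \dots < a_t = b$ witnessing the $(t,d)$-poly property (intersecting the global partition of $\mathbb{R}$ with $[a,b]$ and relabeling), so that on each interval $I_j = [a_{j-1}, a_j]$ the restriction $f|_{I_j}$ is a polynomial $p_j$ of degree at most $d$. Next, on a fixed interval $I_j$, I would count monotonicity pieces of $p_j$: its derivative $p_j'$ is a polynomial of degree at most $d-1$, which vanishes at finitely many points, in fact at most $d-1$ of them. Between consecutive zeros of $p_j'$ (and the endpoints of $I_j$), $p_j$ is strictly monotone, so $p_j|_{I_j}$ has at most $d$ intervals of monotonicity.

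Concatenating across the $t$ pieces gives at most $td$ monotonic sub-intervals covering $[a,b]$; this is a (not necessarily minimal) monotonicity partition of $f$, so by the definition of $c(f)$ we obtain $c(f) \leq td$. Note that merging monotonic pieces across a boundary $a_j$ (when $f$ keeps the same direction) can only decrease the count, which is harmless for an upper bound.

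I do not expect any real obstacle: the main point is simply to couple the combinatorial bound (at most $t$ polynomial pieces) with the elementary fact that a degree-$d$ polynomial has at most $d-1$ critical points. The only care needed is handling the boundary points $a_j$ (where the polynomial can change but $f$ remains continuous) and the edge case of a constant piece; both are absorbed by the weak inequality $c(f) \leq td$.
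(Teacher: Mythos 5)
Your proposal is correct and follows essentially the same route as the paper: restrict the polynomial partition to $[a,b]$ to get at most $t$ pieces, observe that each degree-$\leq d$ polynomial piece splits into at most $d$ intervals of monotonicity via its critical points, and multiply. The paper's proof is exactly this argument, stated slightly more tersely.
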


\begin{proof}
Since $ f :[a,b]\to [a,b]$ is $(t,d)$-poly, then there exists a partition
of the interval $[a,b]$ into subintervals $\set{J_i}^t_{i=1}$ such that $f$ is a polynomial of degree
$\leq d$ on each subinterval $J_i$. 
It is directly for any polynomial degree
$\leq d$, we can divide $\real$ into 
$\leq d$ intervals such that this polynomial 
is monotone in each piece. 
Hence,  we can divide each 
subinterval $J_i$ into at most $d$ pieces, such that 
$f $ is monotone on each piece. 
Thus
\begin{eqnarray*}
c(f)\leq td.
\end{eqnarray*}
\end{proof}

Now, we are ready to prove our first result, which gives an upper bound on the topological entropy of the 
neural networks by the structure parameters of the neural networks.
\begin{thm}\label{thm:main1}
For any  $g\in \mathcal{N}_1(l,m,t,d_1,d_2)$, 
the 
topological entropy for the function $\tau\circ g:[a,b]\to [a,b]$
is upper bounded
by the structure parameters as follows
\begin{eqnarray}
h_{top}(\tau\circ g )
\leq
l(1+\log_2m+\log_2t+\log_2d_1)
+2l^2\log d_2.
\end{eqnarray}
\end{thm}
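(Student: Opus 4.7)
The plan is to reduce the claim to Lemma \ref{lem:monh} applied at $k=1$, which specializes to $h_{top}(f)\le \log_2 c(f)$ for any continuous piecewise-monotone $f:[a,b]\to[a,b]$. Combined with Lemma \ref{lem:pm}, it then suffices to show that $\tau\circ g$ is a continuous $(T,D)$-poly function with $\log_2(TD)$ bounded by the claimed expression. Since the input dimension is $1$, every intermediate node computes a univariate function of $x\in[a,b]$, so the piecewise-polynomial structure can be tracked layer by layer.

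Concretely, I would define $T_k$ and $D_k$ as upper bounds, uniform over all nodes in layer $k$, on the number of pieces and the polynomial degree per piece, so that each node's output is continuous and $(T_k,D_k)$-poly. Starting from $T_0=D_0=1$, I would prove by induction on $k$ the recursion
\[
T_{k+1}\;\le\; 2tmd_1\,T_kD_k,\qquad D_{k+1}\;\le\; d_2 D_k.
\]
The degree bound is immediate: on any piece of the common refinement of the $m$ input partitions, the gate output is a sum of terms $q_j(h_1(x),\dots,h_m(x))$ with $\deg q_j\le d_2$ and each input $h_i$ of degree $\le D_k$. The piece bound combines the $\le mT_k$ common-refinement pieces with new breakpoints from sign changes of the $t$ threshold polynomials $p_i$: each $p_i(h_1(x),\dots,h_m(x))$ is, per common-refinement piece, a polynomial in $x$ of degree $\le d_1D_k$, so it contributes at most $mT_k d_1 D_k$ breakpoints overall, and there are $t$ such thresholds.

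Solving the recurrence gives $D_l\le d_2^l$ and $T_l\le (2tmd_1)^l d_2^{l(l-1)/2}$. Appending $\tau$, a continuous $(2,1,1)$-semi-algebraic cutoff, as one additional single-input gate costs an extra factor of order $D_l$ in the piece count and does not increase the degree, so Lemma \ref{lem:pm} yields $c(\tau\circ g)\le TD$ with
\[
\log_2(TD)\;\le\; l\bigl(1+\log_2 m+\log_2 t+\log_2 d_1\bigr)+2l^2\log_2 d_2,
\]
after routine simplification using $\tfrac{l(l+3)}{2}\le 2l^2$ (the small constant at small $l$ is absorbed). Applying Lemma \ref{lem:monh} at $k=1$ then finishes the proof.

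The main delicate step is the piece-counting at each layer: one must simultaneously account for breakpoints inherited from the input partition and for new breakpoints created where some $p_i$ changes sign in the interior of an existing piece, while arguing that the resulting function remains \emph{continuous} so that Lemma \ref{lem:monh} applies at all. Continuity is precisely where the hypothesis that each semi-algebraic unit is continuous (and $\tau$ itself is continuous) is used in an essential way; without it the piecewise-polynomial function would generally fail to be piecewise monotone in the sense required by Lemma \ref{lem:monh}.
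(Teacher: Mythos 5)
Your proposal is correct and takes essentially the same route as the paper: bound the number of monotone pieces of $\tau\circ g$ via its piecewise-polynomial structure (Lemma \ref{lem:pm}) and then apply the $k=1$ specialization of Lemma \ref{lem:monh}, which gives $h_{top}(\tau\circ g)\le \log_2 c(\tau\circ g)$. The only difference is that you re-derive the layer-by-layer composition recursion for the piece count and degree, whereas the paper imports the corresponding composition lemma for semi-algebraic gates directly from Telgarsky; the resulting bounds on $(T_l,D_l)$ and the final estimate agree up to the same low-order constants both arguments absorb.
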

\begin{proof}
It has been proved that if the function 
$f:\real^n\to \real $ is $(t, d_1, d_2)$ semi-algebraic, 
$g_1,...,g_n:\real \to \real$ is $(s, d_3)$-poly, then 
$\mu(x):=f(g_1(x),...,g_n(x))$ is 
$(stn(1+d_1d_3), d_2d_3)$-poly \cite{Telgarsky2016}.
Thus, by analyzing the neural network  layer by layer, for any $g\in \mathcal{N}_1(l,m,t,d_1,d_2)$, 
$\tau\circ g $ is $(\alpha_l,\beta_l)$-poly, where
\begin{eqnarray*}
\alpha_l&\leq& 2(2mtd_1)^ld^{\frac{1}{2}l^2+l}_2,\\
\beta_l&\leq& d^l_2.
\end{eqnarray*}
Therefore, by Lemma \ref{lem:pm}, we have 
\begin{eqnarray*}
c(\tau\circ g )
\leq  2(2mtd_1)^ld^{2l^2}_2.
\end{eqnarray*}
By  Lemma \ref{lem:monh},
we have 
\begin{eqnarray*}
\lim_k\frac{1}{k}\log_2 c(f^k)
=\inf_k \frac{1}{k}\log_2 c(f^k)
=h_{top}(f),
\end{eqnarray*}
which implies that
\begin{eqnarray*}
c(f)\geq 2^{h_{top}(f)}.
\end{eqnarray*}
Therefore, 
 we have 
\begin{eqnarray*}
h_{top}(\tau\circ g)
\leq l(1+\log_2m+\log_2t+\log_2d_1)
+2l^2\log_2 d_2.
\end{eqnarray*}

\end{proof}

Next, 
to get the relationship between topological entropy 
of the function $f$ and that of the neural networks, we
need to consider the continuity of the topological entropy.

\begin{lem}\label{lem:semcon}\cite{Misiurewicz79hor}
For any continuous function 
$f:[a,b]\to [a,b]$, it holds that
\begin{eqnarray}
\lim_{g\to f}
\inf h_{top}(g)\geq 
h_{top}(f),
\end{eqnarray}
where $g:[a,b]\to [a,b]$ is continuous and $g\to f$ by  
$L^{\infty}$ norm.
\end{lem}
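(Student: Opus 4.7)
The plan is to prove this lower semi-continuity by exploiting the fact that, for continuous interval maps, topological entropy is witnessed by horseshoes, and horseshoes are robust under $L^\infty$ perturbations.

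First I would invoke the horseshoe characterization of topological entropy (Misiurewicz--Szlenk): for any $h < h_{top}(f)$ there exist an integer $n$, closed subintervals $J_1, \ldots, J_s \subset [a,b]$ with pairwise disjoint interiors, and covering relations $f^n(J_i) \supseteq \bigcup_{k=1}^s J_k$ for every $i$, with $(\log_2 s)/n > h$. Such a configuration witnesses $h_{top}(f) \geq (\log_2 s)/n$, since the invariant set built from the horseshoe carries a subsystem semiconjugate to the full shift on $s$ symbols, which has topological entropy $\log_2 s$.

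Next I would establish the openness of horseshoes in the $L^\infty$ topology. Using the uniform continuity of $f^n$ on $[a,b]$ and the strict inclusions $f^n(J_i) \supseteq \bigcup_k J_k$, I can shrink each $J_i$ slightly to a subinterval $J_i'$ and pick $\delta > 0$ so that $f^n(J_i')$ still contains a $\delta$-neighborhood of $\bigcup_k J_k'$ within $[a,b]$. The estimate $\norm{g^{j+1}-f^{j+1}}_\infty \leq \norm{g-f}_\infty + \omega_f(\norm{g^j-f^j}_\infty)$, where $\omega_f$ denotes the modulus of continuity of $f$, iterated $n$ times gives $\norm{g^n - f^n}_\infty \to 0$ as $\norm{g-f}_\infty \to 0$. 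Choosing $g$ close enough to $f$ that $\norm{g^n - f^n}_\infty < \delta$ and applying the intermediate value theorem at the endpoints of each $J_i'$ yields $g^n(J_i') \supseteq \bigcup_k J_k'$, so $\{J_i'\}$ is an $s$-horseshoe for $g^n$, and hence $h_{top}(g) \geq (\log_2 s)/n > h$.

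Taking the infimum of $h_{top}(g)$ over a sufficiently small $L^\infty$-neighborhood of $f$ yields $\liminf_{g \to f} h_{top}(g) \geq h$, and letting $h$ tend to $h_{top}(f)$ completes the proof (the case $h_{top}(f) = +\infty$ is handled by taking $h$ arbitrarily large). The main obstacle is the robustness step: arranging the safety margin $\delta$ and the perturbation control in tandem so that the horseshoe survives. Since $f$ is only continuous and not Lipschitz, controlling $\norm{g^n - f^n}_\infty$ requires iterating the modulus-of-continuity estimate $n$ times, which forces the quantifiers to be chosen in the right order: first $n$ (from the horseshoe), and only then a sufficiently small $L^\infty$-neighborhood of $f$.
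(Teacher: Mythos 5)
The paper does not actually prove Lemma \ref{lem:semcon}: it is imported wholesale from \cite{Misiurewicz79hor} (lower semi-continuity of entropy of interval maps in the $C^0$ topology), so there is no in-paper argument to compare against. What you have written is essentially a reconstruction of Misiurewicz's original proof: reduce to the horseshoe characterization of entropy, observe that a horseshoe is an open condition in the $L^\infty$ topology, and note that the quantifiers must be ordered so that $n$ is fixed before the neighborhood of $f$ is chosen. Your iteration estimate $\norm{g^{n}-f^{n}}_\infty \to 0$ via the modulus of continuity is correct, as is the intermediate-value argument once a margin $\delta$ is secured.

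The one step you should tighten is the claim that you can shrink each $J_i$ to $J_i'$ so that $f^n(J_i')$ contains a $\delta$-neighborhood of $\bigcup_k J_k'$. This can fail for the two extremal elements of the horseshoe partition: in an $s$-horseshoe $(J,\mathcal{D})$ with $J=[c,d]$, the covering relation is $f^n(\overline{D_i})\supseteq[c,d]$ with equality possible at the endpoints $c$ and $d$, and the outermost elements $D_1, D_s$ may reach $c$ and $d$ only at their own boundary points, so no uniform margin on both sides can be arranged for them. The standard repair is to discard these two elements: replacing $J$ by $J'=[c+\eta,d-\eta]$, the middle elements $D_2,\dots,D_{s-1}$ satisfy $f^n(\overline{D_i})\supseteq[c,d]$, which contains an $\eta$-neighborhood of $J'$, so they form an $(s-2)$-horseshoe for every $g^n$ with $\norm{g^n-f^n}_\infty<\eta$. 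This gives $h_{top}(g)\ge \frac{1}{n}\log_2(s-2)$ rather than $\frac{1}{n}\log_2 s$, and the loss is absorbed by passing to powers: $f^{nm}$ has an $s^m$-horseshoe, and $\frac{1}{nm}\log_2(s^m-2)\to\frac{1}{n}\log_2 s$ as $m\to\infty$. With that adjustment your argument is complete and is the intended proof of the cited result.
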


Based on the  lower semi-continuity of 
topological entropy,  if the 
given function has finite topological entropy, then for any $\epsilon>0$,
there exists $\delta>0$ such that 
for any continuous function $g:[a,b]\to [a,b]$ with 
$\norm{f-g}_{L^{\infty}}<\delta$, we have 
\begin{eqnarray*}
h_{top}(g)
\geq h_{top}(f)-\epsilon.
\end{eqnarray*}
If $0<h_{top}(f)<+\infty$, let us take $\epsilon=\frac{1}{2}h_{top}(f)$, there exists
$\delta(f)>0$ such that for any continuous function $g$ with $\norm{f-g}_{L^{\infty}}<\delta(f)$, we have 
\begin{eqnarray*}
h_{top}(g)
\geq \frac{1}{2}h_{top}(f).
\end{eqnarray*}

\begin{thm}\label{thm:main2}
Given a continuous function
$f:[a,b]\to [a,b]$ with positive and finite topological entropy, 
then there exists $\delta(f)>0$ such that 
for any  $g\in \mathcal{N}_1(l,m,t,d_1,d_2)$ with $\norm{f-g}_{L^{\infty}}\leq \delta(f)$, we have 
\begin{eqnarray}
m\geq 
\frac{2^{\frac{1}{2l}{h(f)}}}{2td_1d^{2l}_2}.
\end{eqnarray}
\end{thm}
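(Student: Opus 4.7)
The plan is to sandwich $h_{top}(\tau\circ g)$ between a lower bound derived from the closeness hypothesis $\norm{f-g}_{L^\infty}\leq\delta(f)$ (via Lemma \ref{lem:semcon}) and the structural upper bound already proved in Theorem \ref{thm:main1}, and then to solve the resulting inequality for $m$.

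First I would pass from $g$ to $\tau\circ g$ while controlling the $L^\infty$ distance to $f$. Since $f$ takes values in $[a,b]$, the clip map $\tau$ acts as the identity on the range of $f$, so $\tau\circ f=f$. Because $\tau$ is $1$-Lipschitz on $\real$, this gives
$$\norm{f-\tau\circ g}_{L^\infty} = \norm{\tau\circ f-\tau\circ g}_{L^\infty}\leq \norm{f-g}_{L^\infty}\leq \delta(f).$$
Thus $\tau\circ g$ is a continuous self-map of $[a,b]$ that is still $\delta(f)$-close to $f$ in $L^\infty$.

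Next, as already explained in the paragraph preceding the theorem, Lemma \ref{lem:semcon} provides a $\delta(f)>0$ (taken for the choice $\epsilon=\tfrac12 h_{top}(f)$, which is legitimate because $h_{top}(f)$ is positive and finite) such that every continuous self-map $h:[a,b]\to[a,b]$ with $\norm{f-h}_{L^\infty}<\delta(f)$ satisfies $h_{top}(h)\geq \tfrac{1}{2}h_{top}(f)$. Applied to $h=\tau\circ g$ this yields $h_{top}(\tau\circ g)\geq \tfrac{1}{2}h_{top}(f)$. Combining this with the upper bound of Theorem \ref{thm:main1} gives
$$\tfrac{1}{2}h_{top}(f)\leq h_{top}(\tau\circ g)\leq l(1+\log_2 m+\log_2 t+\log_2 d_1)+2l^2\log_2 d_2,$$
and isolating $\log_2 m$ on the right-hand side produces
$$\log_2 m \geq \frac{1}{2l}h_{top}(f)-1-\log_2 t-\log_2 d_1-2l\log_2 d_2,$$
which is exactly the claimed bound $m\geq 2^{h_{top}(f)/(2l)}/(2td_1 d_2^{2l})$.

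The main obstacle is not technical depth but identifying the correct object to feed into both ingredients: an arbitrary $g\in\mathcal{N}_1(l,m,t,d_1,d_2)$ need not be a self-map of $[a,b]$, so neither Lemma \ref{lem:semcon} nor Theorem \ref{thm:main1} can be invoked directly on $g$. Composing with $\tau$ resolves both issues simultaneously because $\tau$ is $1$-Lipschitz and fixes $[a,b]$ pointwise, and this is precisely why $\tau$ was inserted into the statement of Theorem \ref{thm:main1} in the first place. Once this reduction is in place, the argument is just algebra combining two inequalities.
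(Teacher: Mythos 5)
Your proposal is correct and follows essentially the same route as the paper: compose with the clip map $\tau$ (using that $\tau$ is $1$-Lipschitz and fixes $[a,b]$, so $\norm{f-\tau\circ g}_{L^\infty}\leq\norm{f-g}_{L^\infty}$), invoke the lower semi-continuity of topological entropy with $\epsilon=\tfrac12 h_{top}(f)$ to get $h_{top}(\tau\circ g)\geq\tfrac12 h_{top}(f)$, and combine with the upper bound of Theorem \ref{thm:main1} to solve for $m$. If anything, your write-up is more explicit than the paper's about why $\tau\circ f=f$ and why one cannot apply the two ingredients to $g$ directly.
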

\begin{proof}

First, based on Lemma \ref{lem:semcon}, 
there exists $\delta(f)>0$
such that for any continuous function 
$g:[a, b]\to [a,b]$, we have
\begin{eqnarray*}
h_{top}(\tau\circ g)
\geq \frac{1}{2}h_{top}(f).
\end{eqnarray*}

Besides, it is easy to see that $\tau$ is a Lipschitz function and 
$|\tau(x)-\tau(y)|\leq |x-y|$.
Hence, for any 
$g\in \mathcal{N}_1(l,m,t,d_1,d_2)$ with $\norm{f-g}_{L^{\infty}}\leq \delta(f)$, we have 
\begin{eqnarray*}
\norm{\tau\circ g-f}_{L^{\infty}}\leq \norm{g-f}_{L^{\infty}}\leq \delta(f).
\end{eqnarray*}
Then the topological entropy of $\tau\circ g:[a,b]\to [a,b]$ has the following 
lower bound,
\begin{eqnarray*}
h_{top}(\tau\circ g)
\geq \frac{1}{2}h_{top}(f).
\end{eqnarray*}

However, due to  Theorem \ref{thm:main1}, for any $g\in \mathcal{N}_1(l,m,t,d_1,d_2)$,
we have 
\begin{eqnarray*}
h(\tau\circ g )\leq1+l+l\log_2 m.
\end{eqnarray*}
Therefore, we have 
\begin{eqnarray*}
\frac{1}{2}h_{top}(f)
\leq  l(1+\log_2m+\log_2t+\log_2d_1)
+2l^2\log_2 d_2.
\end{eqnarray*}
That is 
\begin{eqnarray*}
m\geq 
\frac{2^{\frac{1}{2l}h_{top}(f)}}{2td_1d^{2l}_2}.
\end{eqnarray*}

\end{proof}
Theorem \ref{thm:main2} tells us that if 
the neural network $g\in \mathcal{N}_1(l,m,t,d_1,d_2)$ 
is a good approximation (i.e., $\norm{f-g}_{L^{\infty}}\leq \delta(f)$ ), 
then the depth $m$ has an exponential lower bound with respect to 
the topological entropy.

Besides, if  we iterate the function for $k$ times, i.e, 
$f^k$ and the neural network $g\in \mathcal{N}_1(l,m,t,d_1,d_2)$  is a good approximation of 
$f^k$, we have the following corollary.

\begin{cor}
Given a continuous function
$f:[0,1]\to [0,1]$ with positive and finite topological entropy, 
then there exists $\delta(f^k)>0$ such that 
for any  $g\in \mathcal{N}_1(l,m,t,d_1,d_2)$ with $\norm{f^k-g}_{L^{\infty}}\leq \delta(f^k)$, we have 
\begin{eqnarray}
m\geq 
\frac{2^{\frac{k}{2l}h_{top}(f)}}{2td_1d^{2l}_2}.
\end{eqnarray}
\end{cor}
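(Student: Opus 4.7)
The plan is to derive this corollary as a direct consequence of Theorem \ref{thm:main2} applied to the iterate $f^k$ in place of $f$. First I would observe that since $f:[0,1]\to[0,1]$ is continuous, its $k$-th iterate $f^k:[0,1]\to[0,1]$ is again a continuous self-map of the interval, so Theorem \ref{thm:main2} is applicable to $f^k$ provided its topological entropy is positive and finite.

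The key step is to invoke the well-known multiplicativity property of topological entropy under iteration, namely
\begin{eqnarray*}
h_{top}(f^k) = k\cdot h_{top}(f).
\end{eqnarray*}
This is a standard fact listed among the properties of topological entropy collected in the appendix, and it immediately guarantees that $0 < h_{top}(f^k) < +\infty$ whenever the same holds for $f$. Given this, the lower semi-continuity argument used to derive Theorem \ref{thm:main2} yields some $\delta(f^k)>0$ such that any $g\in\mathcal{N}_1(l,m,t,d_1,d_2)$ satisfying $\norm{f^k-g}_{L^{\infty}}\leq \delta(f^k)$ obeys
\begin{eqnarray*}
m \geq \frac{2^{\frac{1}{2l}h_{top}(f^k)}}{2td_1d^{2l}_2}.
\end{eqnarray*}

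Substituting $h_{top}(f^k)=k\cdot h_{top}(f)$ into the exponent yields exactly the claimed bound
\begin{eqnarray*}
m \geq \frac{2^{\frac{k}{2l}h_{top}(f)}}{2td_1d^{2l}_2}.
\end{eqnarray*}
There is essentially no genuine obstacle: the whole argument is routine provided one accepts the multiplicativity $h_{top}(f^k)=k\cdot h_{top}(f)$. The only minor point to verify is that applying Theorem \ref{thm:main2} to $f^k$ does not implicitly require a new hypothesis beyond positivity and finiteness of $h_{top}(f^k)$, which is automatic from $0<h_{top}(f)<\infty$. Thus the corollary follows with at most one or two lines of substitution.
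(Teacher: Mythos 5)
Your proof is correct and follows exactly the paper's own route: apply Theorem \ref{thm:main2} to the iterate $f^k$ and invoke the identity $h_{top}(f^k)=k\,h_{top}(f)$ (Lemma \ref{lem:k} in the appendix) to rewrite the exponent. Your added remark that positivity and finiteness of $h_{top}(f^k)$ follow automatically from those of $h_{top}(f)$ is a sensible (if minor) point the paper leaves implicit.
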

\begin{proof}
This corollary comes directly from 
Theorem \ref{thm:main2} and 
$h_{top}(f^k)=kh_{top}(f)$ for any integer $k\geq 0$. (See Lemma \ref{lem:k}
in Appendix \ref{sec:top}.) 
\end{proof}

For example, if we take the activation function to be
ReLU unit  which is continuous and (1,1,1) semi-algebraic, then the following 
statements come directly from Theorem \ref{thm:main1} and  \ref{thm:main2}.

\begin{prop}
For any ReLU network $g$ with at most $l$ layers and  at most $m$ nodes per layer, then
\begin{eqnarray}
h_{top}(\tau\circ g)
\leq l(1+\log_2m).
\end{eqnarray}
\end{prop}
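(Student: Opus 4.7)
The plan is to obtain the claim as a direct specialization of Theorem \ref{thm:main1} to the ReLU activation. The only preparatory step is to identify the semi-algebraic parameters of the ReLU gate: as recorded in the Preliminaries, the map $\vec{x}\mapsto \sigma_R(\langle \vec{a}, \vec{x}\rangle + b)$ is continuous and $(1,1,1)$-semi-algebraic. Consequently, any ReLU network with at most $l$ layers and at most $m$ nodes per layer lies in the class $\mathcal{N}_1(l, m, 1, 1, 1)$, which is exactly the setting to which Theorem \ref{thm:main1} applies.

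With this identification in hand, I would simply substitute $t = d_1 = d_2 = 1$ into the bound
\begin{eqnarray*}
h_{top}(\tau\circ g) \leq l(1 + \log_2 m + \log_2 t + \log_2 d_1) + 2l^2 \log_2 d_2
\end{eqnarray*}
supplied by Theorem \ref{thm:main1}. Since $\log_2 1 = 0$, the terms involving $t$, $d_1$, and $d_2$ all vanish, leaving $h_{top}(\tau\circ g) \leq l(1 + \log_2 m)$, which is precisely the claim.

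There is no genuine obstacle: the whole argument reduces to a single line of algebra once Theorem \ref{thm:main1} is invoked. The only thing worth double-checking is that a ``node'' in the proposition is interpreted exactly as an application of $\sigma_R(\langle \vec{a}, \cdot \rangle + b)$, consistent with the definition of $\mathcal{N}_1(l,m,t,d_1,d_2)$, and that the clipping map $\tau$ is the one introduced just before Theorem \ref{thm:main1} so that $\tau\circ g$ genuinely maps $[a,b]$ into itself and its topological entropy is well defined. Neither of these is substantive, so the proposition requires nothing beyond quoting the previous theorem with the stated parameter values.
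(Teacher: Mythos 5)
Your proposal is correct and matches the paper's own treatment: the paper likewise derives this proposition by noting that the ReLU gate is continuous and $(1,1,1)$ semi-algebraic and then specializing Theorem \ref{thm:main1} with $t=d_1=d_2=1$, so the $\log_2 t$, $\log_2 d_1$, and $l^2\log_2 d_2$ terms vanish and the bound $l(1+\log_2 m)$ follows immediately.
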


\begin{prop}
Given a continuous function
$f:[a,b]\to [a,b]$ with finite topological entropy, 
then there exists $\delta(f)>0$ such that 
for any ReLU network  $g$ with at most $l$ layers and  at most $m$ nodes per layer which satisfies  
$\norm{f-g}_{L^{\infty}}\leq \delta(f)$, we have 
\begin{eqnarray}
m\geq 2^{\frac{1}{2l}h_{top}(f)-1}.
\end{eqnarray}

Moreover, if $g$ is 
 a good approximation of 
$f^k$ with respect to $L^{\infty}$ norm, i.e., $\norm{f^k-g}_{L^{\infty}}\leq \delta(f^k)$, then 
we have 
\begin{eqnarray}
m\geq 2^{\frac{k}{2l}h_{top}(f)-1}.
\end{eqnarray}

\end{prop}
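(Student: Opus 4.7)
The proposition is a direct specialization of Theorem \ref{thm:main2} (and its iterated corollary) to the ReLU activation, so the plan is essentially to substitute parameters, then tidy up the resulting exponent. I would organize it in three short steps.

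First, I would record that the ReLU gate $\vec{x}\mapsto \sigma_R(\langle \vec{a},\vec{x}\rangle + b)$ is continuous and $(1,1,1)$-semi-algebraic (this was stated in the Preliminaries following the definition of semi-algebraic units, citing \cite{Telgarsky2016}). Hence any ReLU network $g$ with at most $l$ layers and at most $m$ nodes per layer belongs to $\mathcal{N}_1(l,m,t,d_1,d_2)$ with $t=d_1=d_2=1$, so the hypotheses of Theorem \ref{thm:main2} apply verbatim to $g$.

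Second, I would apply Theorem \ref{thm:main2}: there exists $\delta(f)>0$ such that if $\norm{f-g}_{L^\infty}\leq \delta(f)$, then
\begin{eqnarray*}
m \;\geq\; \frac{2^{\frac{1}{2l}h_{top}(f)}}{2\,t\,d_1\,d_2^{2l}}.
\end{eqnarray*}
Substituting $t=d_1=d_2=1$ collapses the denominator to $2$, giving
\begin{eqnarray*}
m \;\geq\; \frac{2^{\frac{1}{2l}h_{top}(f)}}{2} \;=\; 2^{\frac{1}{2l}h_{top}(f) - 1},
\end{eqnarray*}
which is the first inequality.

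Third, for the iterated version I would invoke the preceding corollary applied to $f^k$: the same substitution $t=d_1=d_2=1$, together with $h_{top}(f^k)=k\,h_{top}(f)$ (Lemma \ref{lem:k} in Appendix \ref{sec:top}), yields $m \geq 2^{\frac{k}{2l}h_{top}(f)-1}$. There is really no obstacle here; the only thing one has to be careful about is that $\delta(f^k)$ in general differs from $\delta(f)$ (it comes from applying the lower semi-continuity Lemma \ref{lem:semcon} to $f^k$ rather than $f$), so I would state the hypothesis as $\norm{f^k - g}_{L^\infty}\leq \delta(f^k)$ rather than reuse $\delta(f)$. With that clarified, both bounds follow immediately.
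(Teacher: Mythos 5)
Your proposal is correct and matches the paper's own treatment: the paper gives no separate proof, stating only that the proposition ``comes directly from Theorem \ref{thm:main1} and \ref{thm:main2}'' by taking the ReLU unit to be continuous and $(1,1,1)$ semi-algebraic, which is exactly the substitution $t=d_1=d_2=1$ you carry out. Your remark that $\delta(f^k)$ comes from applying lower semi-continuity to $f^k$ rather than $f$ is a correct and slightly more careful reading than the paper makes explicit.
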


If the  function $f$ we would like to present has infinity topological entropy, i.e., $h_{top}(f)=+\infty$, 
then due to the lower semi-continuity 
of topological entropy, for any $N>0$, there exists $\delta_{N}(f)>0$ such that 
for any continuous function $g: [a,b]\to [a,b]$
with $\norm{g-f}_{L^{\infty}}<\delta_N(f)$,
\begin{eqnarray}
h_{top}(g)\geq N
\end{eqnarray}
\begin{prop}
Given a continuous function
$f:[a,b]\to [a,b]$ with $h_{top}(f)=+\infty$,  then 
any $N>0$ sufficiently large, there exists
$\delta_N(f)>0$ such that 
for any $g\in \mathcal{N}_1(l,m,t,d_1,d_2)$ with $\norm{f-g}_{L^{\infty}}<\delta_N(f)$, we have 
\begin{eqnarray}
m\geq 
\frac{2^{N/l}}{2td_1d^l_2}.
\end{eqnarray}
\end{prop}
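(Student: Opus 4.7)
The plan is to mirror the proof of Theorem \ref{thm:main2}, but replace the fixed lower bound $\tfrac{1}{2} h_{top}(f)$ coming from taking $\epsilon = \tfrac{1}{2} h_{top}(f)$ in the lower semi-continuity argument with an arbitrarily large threshold $N$, which is now available because $h_{top}(f) = +\infty$.

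First I would invoke Lemma \ref{lem:semcon}: since $\lim_{g\to f}\inf h_{top}(g)\geq h_{top}(f)=+\infty$, for any $N>0$ there exists $\delta_N(f)>0$ such that every continuous $g:[a,b]\to[a,b]$ with $\|g-f\|_{L^\infty}<\delta_N(f)$ satisfies $h_{top}(g)\geq N$. Next, given $g\in\mathcal{N}_1(l,m,t,d_1,d_2)$ with $\|f-g\|_{L^\infty}<\delta_N(f)$, I would pass to $\tau\circ g$, which is a bona fide continuous self-map of $[a,b]$. Because $\tau$ is $1$-Lipschitz and acts as the identity on $[a,b]$ where $f$ takes its values, one has
\begin{eqnarray*}
\|\tau\circ g-f\|_{L^\infty}
= \|\tau\circ g-\tau\circ f\|_{L^\infty}
\leq \|g-f\|_{L^\infty}<\delta_N(f),
\end{eqnarray*}
so the preceding step applied to $\tau\circ g$ yields $h_{top}(\tau\circ g)\geq N$.

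Now I would apply Theorem \ref{thm:main1} to $\tau\circ g$, which gives the structural upper bound
\begin{eqnarray*}
N\leq h_{top}(\tau\circ g)
\leq l(1+\log_2 m+\log_2 t+\log_2 d_1)+2l^2\log_2 d_2.
\end{eqnarray*}
Solving for $\log_2 m$ and exponentiating yields $m\geq 2^{N/l}/(2td_1 d_2^{2l})$, which matches the conclusion of the proposition (up to the $d_2^{l}$ vs. $d_2^{2l}$ exponent in the displayed bound, consistent with the corresponding exponent in Theorem \ref{thm:main2}).

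There is no real obstacle here; the only subtle point is checking that the function one feeds into the lower semi-continuity statement is genuinely a continuous self-map of $[a,b]$, which is exactly why the truncation $\tau$ is introduced. Everything else is a direct substitution, and the argument works uniformly in $N$ precisely because $h_{top}(f)=+\infty$ lets the semi-continuity statement be applied with arbitrarily large thresholds.
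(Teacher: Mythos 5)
Your proposal is correct and follows essentially the same route as the paper, which simply states that the proof is identical to that of Theorem \ref{thm:main2} with the threshold $\frac{1}{2}h_{top}(f)$ replaced by an arbitrary $N$ made available by $h_{top}(f)=+\infty$. Your remark about the $d_2^{l}$ versus $d_2^{2l}$ exponent correctly identifies a minor inconsistency already present in the paper's own statements.
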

\begin{proof}
The proof is the same as Theorem \ref{thm:main2}.
\end{proof}

\subsection{Examples}

First, 
let us consider the tent map $t_{\alpha}:[0,1]\to [0,1]$, where $t_{\alpha}(x)$ is defined as follows
\begin{equation*}
t_{\alpha}(x)=\left\{
\begin{array}{c}
\alpha x, 0\leq x\leq 1/2,\\
\\
\alpha(1-x), 1/2<x\leq 1,
\end{array}
\right.
\end{equation*}
where $0\leq \alpha\leq 2$. (See Figure \ref{fig:tent})

 \begin{figure}[h]
  \center{
  \subfigure[Tent map $t_{\alpha}$]{
\begin{minipage}[t]{0.5\linewidth}
\centering
\includegraphics[width=5.5cm]{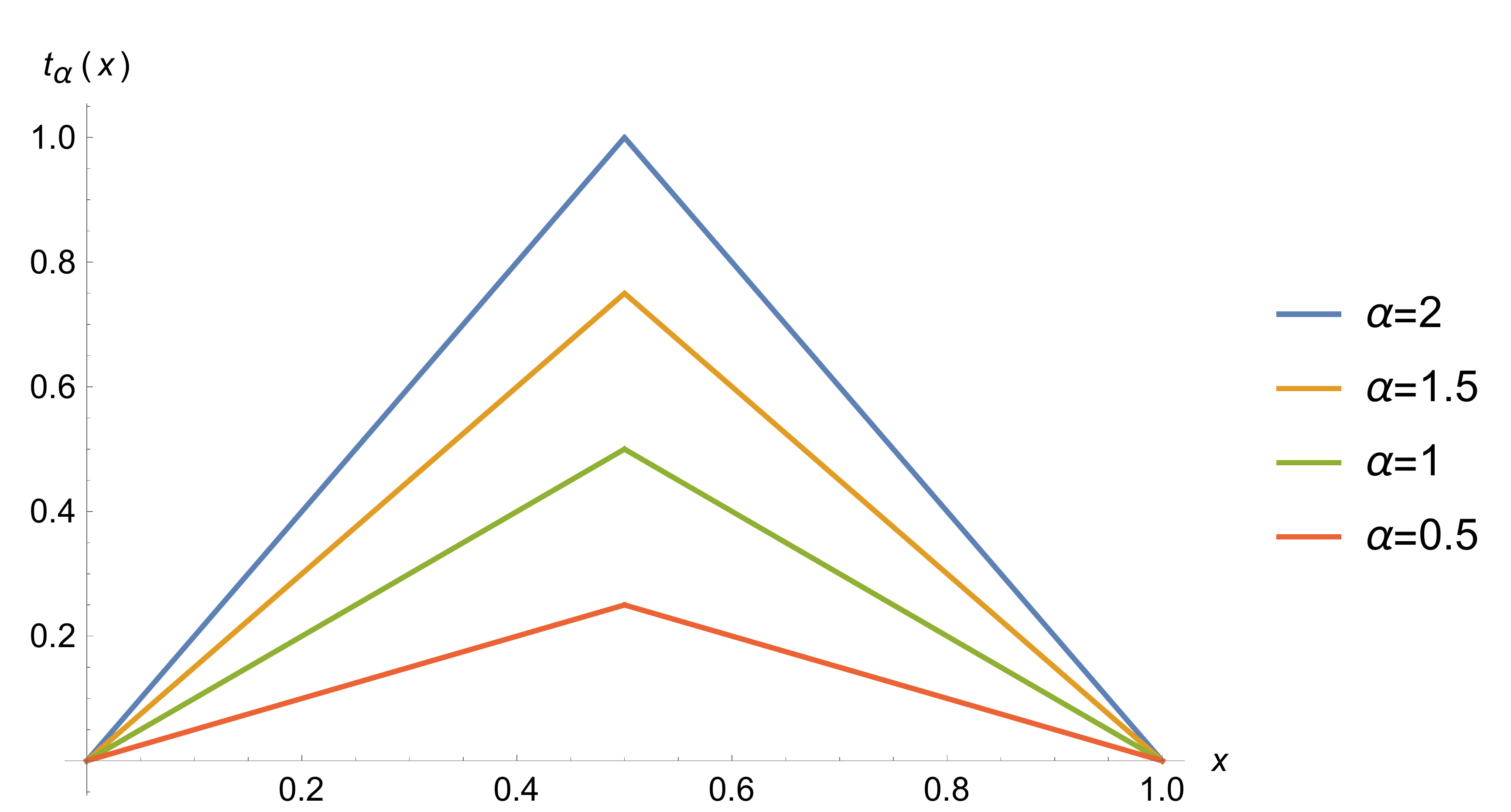}
\end{minipage}%
}%
\subfigure[$t^4_{\alpha}$]{
\begin{minipage}[t]{0.5\linewidth}
\centering
\includegraphics[width=5.5cm]{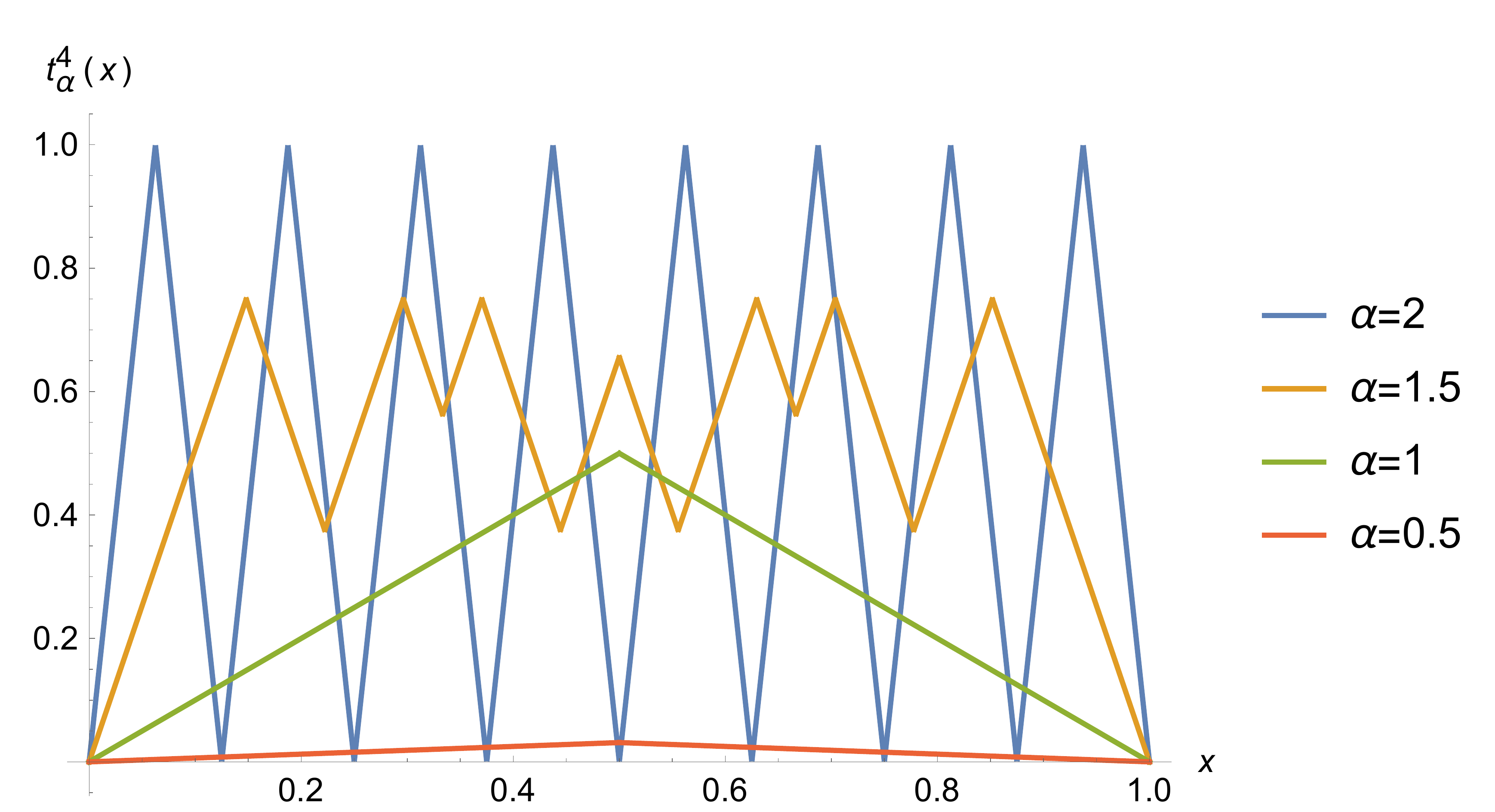}
\end{minipage}%
}%
  }     
  \caption{Tent map $t_{\alpha}$ and $t^4_{\alpha}$  with  different parameters $\alpha$.}
  \label{fig:tent}
 \end{figure}

The topological entropy of $t_{\alpha}$ can be easily computed by Lemma \ref{lem:slope}, and we have 
\begin{equation*}
h_{top}(t_{\alpha})=\left\{
\begin{array}{c}
0, 0\leq \alpha \leq 1,\\
\\
\log_2\alpha, 1<\alpha\leq 2.
\end{array}
\right.
\end{equation*}
(See Figure \ref{fig:toptent}.)

 \begin{figure}[h]
  \center{\includegraphics[width=5.5cm]  {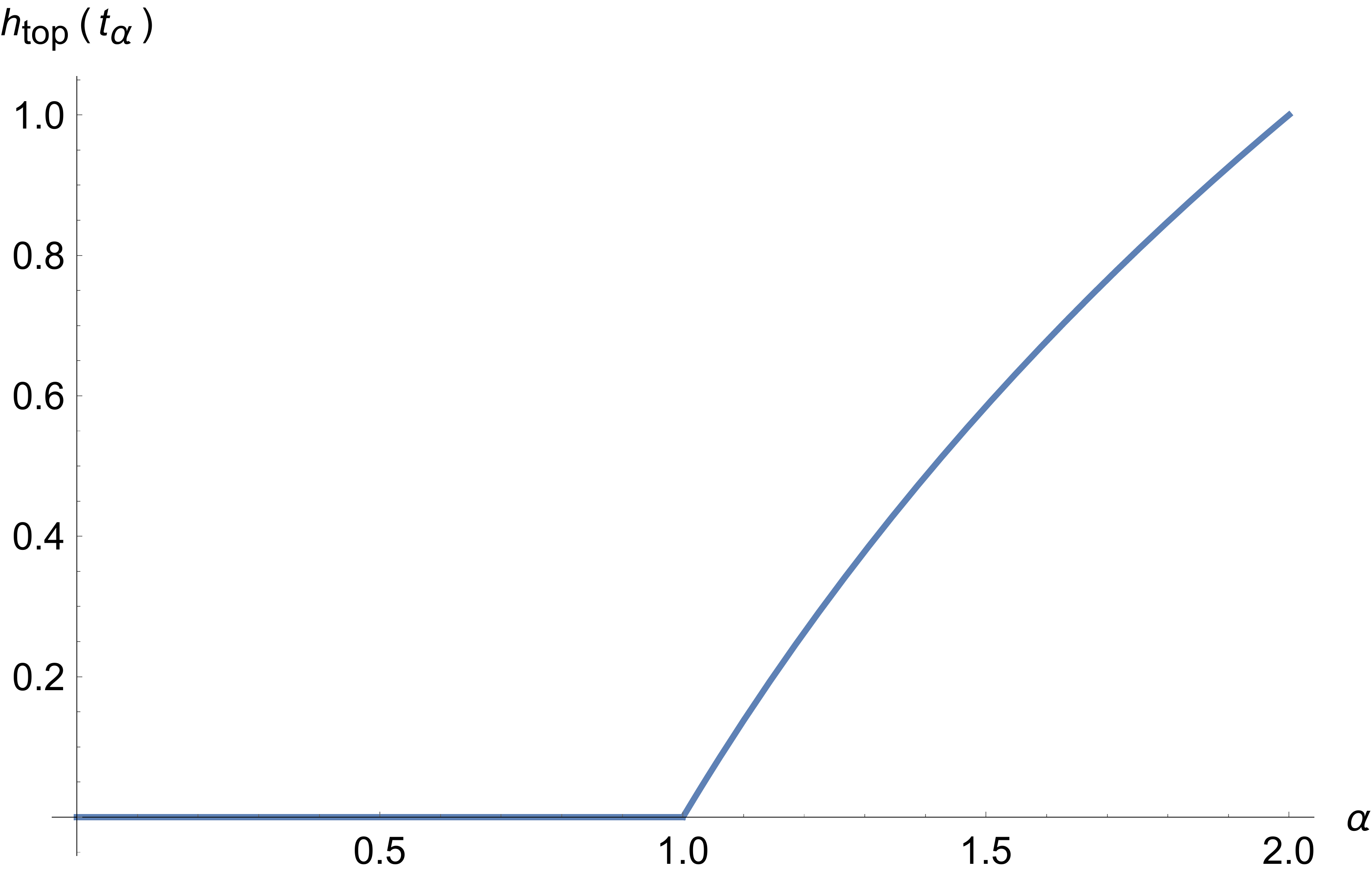}}     
  \caption{The topological entropy of the tent map $t_{\alpha}$ for $0<\alpha\leq 2$.}
  \label{fig:toptent}
 \end{figure}

Hence,   based on Theorem \ref{thm:main2}, if we would  like to  have a good approximation 
of $t^{k}_{\alpha}$ for $\alpha>1$, then 
the width required to represent $t^{k}_{\alpha}$ with continuous and  $(t,d_1,d_2)$ semi-algebraic units is
\begin{eqnarray*}
m\geq C(t,d_1,d_2)\alpha^{k/l},
\end{eqnarray*}
where $C(t,d_1,d_2)$ is a constant which only depends on 
$t,d_1,d_2$.

Next, 
let us consider the logistic map
$f_{\beta}:[0,1]\to [0,1]$ as follows
\begin{eqnarray*}
f(x)=\beta x(1-x),
\end{eqnarray*}
where the parameter $\beta$ is taken from $[0,4]$ (See Figure \ref{fig:log}). 
Logistic map has been used  to get   lower bounds on the size of sigmoidal neural networks \cite{Schmitt1999}.

 \begin{figure}[h]
  \center{
  \subfigure[Logistic map $f_{\beta}$]{
\begin{minipage}[t]{0.5\linewidth}
\centering
\includegraphics[width=5.5cm]{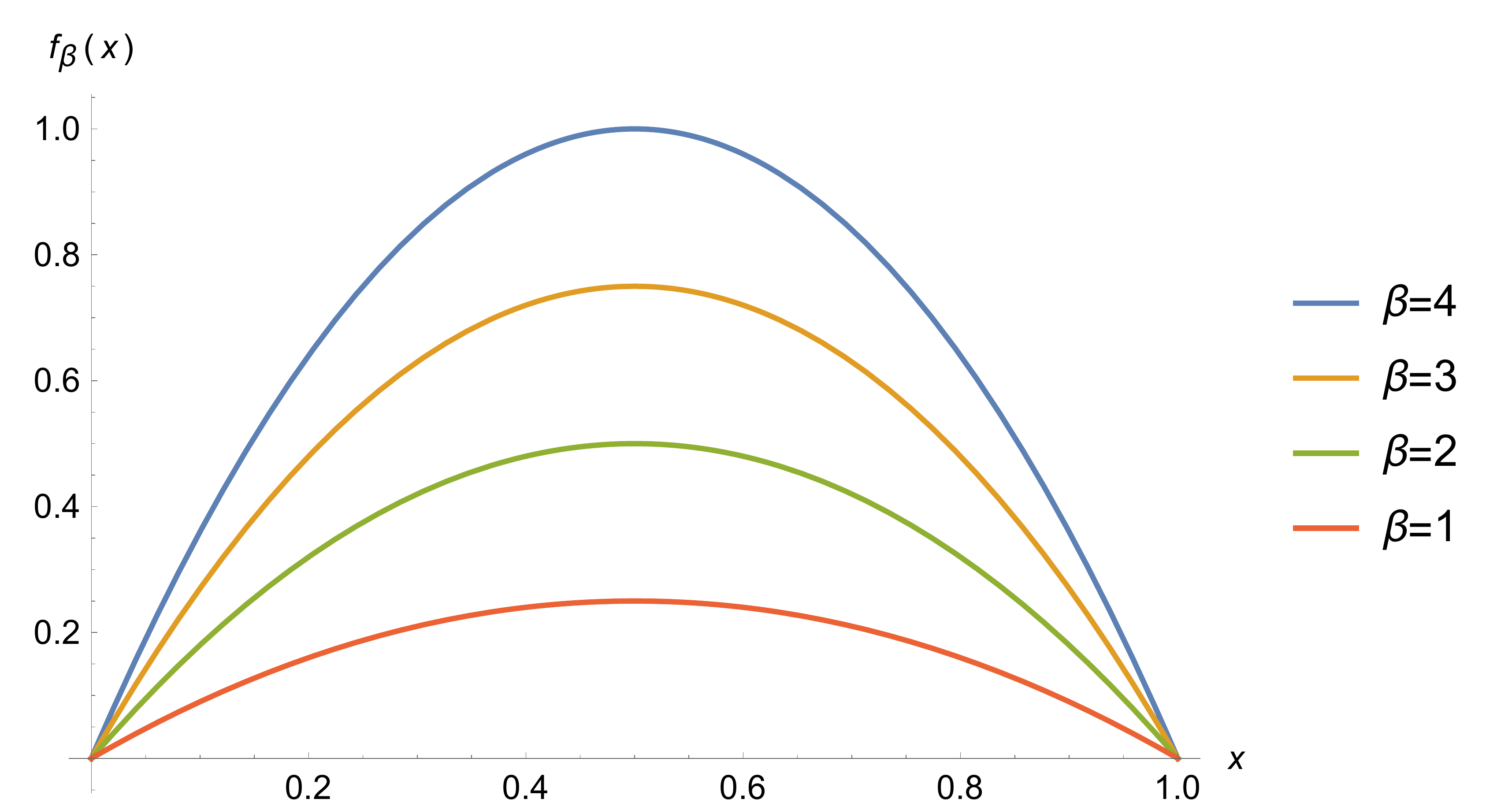}
\end{minipage}%
}%
\subfigure[$f^4_{\beta}$ ]{
\begin{minipage}[t]{0.5\linewidth}
\centering
\includegraphics[width=5.5cm]{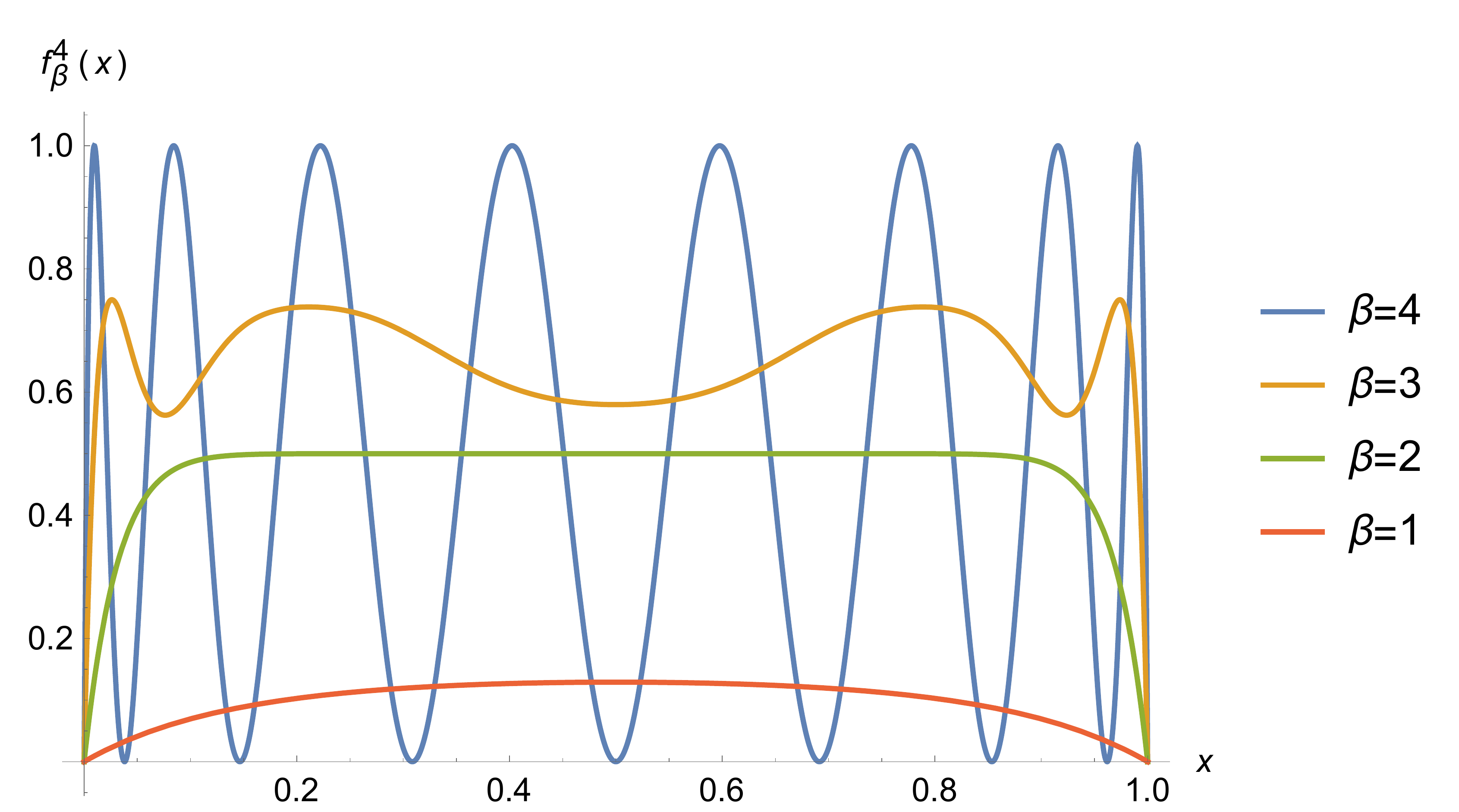}
\end{minipage}%
}%
  }     
  \caption{Logistic map $f_{\beta}$ and $f^4_{\beta}$ with different parameters $\beta$.}
  \label{fig:log}
 \end{figure}

It is easy to see that $ h_{top}(f_\beta)=1$ when $\beta=4$, and  $h_{top}(f_\beta)=0$ when 
$\beta=2$. Hence,   based on  Theorem \ref{thm:main2}, if we would  like to  have a good approximation 
of $f^{k}_{4}$, then 
the width required to represent $f^{k}_{4}$ with continuous and $(t,d_1,d_2)$ semi-algebraic function is
\begin{eqnarray*}
m\geq C(t,d_1,d_2)2^{k/l},
\end{eqnarray*}
where $C(t,d_1,d_2)$ is a constant which only depends on 
$t,d_1,d_2$.

\section{Relationship between topological entropy and periods, number of crossings and Lipschitz constant }
In this section, we will discuss the connection between 
topological entropy and periods, number of crossings and Lipschitz constant.
\subsection{Relationship between topological entropy and periods, the number of crossings }
 In fact, the relationship between topological entropy and periods has been discussed in \cite{Alseda00}, which has the following statement. 

\begin{lem}[\cite{Alseda00}]
Given a continuous map $f:[a,b]\to [a,b]$, it has positive topological entropy
iff it has a cycle of 
period which is not a power of 2. 
\end{lem}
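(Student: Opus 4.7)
The plan is to prove both directions by combining Sharkovsky's theorem with a horseshoe construction, using Lemma \ref{lem:monh} to convert horseshoes into positive topological entropy and the identity $h_{top}(f^r) = r\, h_{top}(f)$ (Lemma \ref{lem:k} in the appendix) to transfer entropy between $f$ and its iterates.

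For the ``if'' direction, suppose $f$ has a cycle of period $p$ with $p = 2^k q$, $q \geq 3$ odd. First I would pass to the iterate $g := f^{2^k}$: the $f$-orbit splits into $g$-orbits whose periods divide $q$, so at least one of them has odd period $q' \geq 3$. Then I would build a horseshoe for $g$ from this odd orbit. Ordering the orbit points as $x_1 < x_2 < \cdots < x_{q'}$, the restriction of $g$ to the orbit is a cyclic permutation of odd order, which forces the existence of two adjacent points $x_i, x_{i+1}$ whose images straddle each other. From this one locates closed subintervals $J_1, J_2 \subset [x_1, x_{q'}]$ with disjoint interiors, and an integer $n \leq q'$, such that each $g^n(J_i)$ covers $J_1 \cup J_2$. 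Iterating this horseshoe gives $c(g^{nm}) \geq 2^m$, so by Lemma \ref{lem:monh} we get $h_{top}(g) \geq (\log 2)/n > 0$, and applying the iteration identity yields $h_{top}(f) = 2^{-k} h_{top}(g) > 0$.

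For the ``only if'' direction I would argue the contrapositive: assuming $Per(f) \subseteq \{1,2,4,8,\ldots\}$, show $h_{top}(f)=0$. The cleanest route is Misiurewicz's theorem (classical, found in \cite{Alseda00}), which states that positive topological entropy of an interval map is equivalent to the existence of a horseshoe for some iterate $f^n$, i.e., two intervals $J_1, J_2$ with disjoint interiors such that $f^n(J_i) \supseteq J_1 \cup J_2$ for $i = 1,2$. Symbolic dynamics on such a horseshoe embeds the full one-sided $2$-shift into $f^n$ restricted to an invariant Cantor set, producing periodic points of $f^n$ of every period; in particular an odd period $q \geq 3$. Pulling back to $f$, the corresponding orbit has period $nq/\gcd(n,q)$ or a divisor thereof, and at least one prime factor equal to $q$ or an odd divisor of $q$; in any case the period is not a power of $2$, contradicting the hypothesis.

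The main obstacle is the horseshoe construction from an odd periodic orbit used in the ``if'' direction: although purely combinatorial, it requires careful bookkeeping of the induced permutation on orbit points to locate the two nested covering intervals. Misiurewicz's theorem used in the reverse direction is genuinely deeper, but I would quote it rather than re-derive it, since it lies at the heart of one-dimensional entropy theory and is well documented in \cite{Alseda00}.
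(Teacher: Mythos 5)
The paper offers no proof of this lemma at all: it is imported verbatim from \cite{Alseda00} as a black box, so there is no internal argument to compare yours against. What you have written is a reconstruction of the standard proof from that reference, and in outline it is correct; the useful comparison is therefore between your sketch and the machinery the paper already has on hand. In the ``if'' direction, the reduction to $g=f^{2^k}$ is cleaner than you state: a point of $f$-period $p=2^k q$ has $g$-period exactly $p/\gcd(p,2^k)=q$, so no hedging over ``orbits whose periods divide $q$'' is needed; the entropy transfer $h_{top}(f)=2^{-k}h_{top}(g)$ is precisely Lemma \ref{lem:k}. The genuinely technical step is the one you flag: extracting a horseshoe from an odd orbit. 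For $q=3$ it is immediate (with orbit $x_1<x_2<x_3$ and $I_1=[x_1,x_2]$, $I_2=[x_2,x_3]$ one gets $g(I_1)\supseteq I_2$ and $g(I_2)\supseteq I_1\cup I_2$, hence $g^2(I_i)\supseteq I_1\cup I_2$ for both $i$), but for general odd $q\geq 3$ the Stefan-cycle bookkeeping is several pages of \cite{Alseda00}; quoting it is legitimate, but as written your argument has an acknowledged hole there if it is meant to be self-contained. In the ``only if'' direction, note that the horseshoe theorem you invoke is already stated in the paper's Appendix \ref{sec:top} (the unnumbered lemma citing \cite{Misiurewicz79hor,Misiurewicz80hor}), so you can cite that rather than importing Misiurewicz externally; and the closing arithmetic tightens to one line: if $y$ has $f^n$-period $q$ (odd, $\geq 3$) and $f$-period $m$, then $q=m/\gcd(m,n)$ divides $m$, so $m$ cannot be a power of $2$. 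Overall: a correct and standard route, consistent with the source the paper cites, with the Stefan-cycle construction left as a declared black box.
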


In this subsection, we will show the connection between 
topological entropy and the number of crossings  for piece-wise monotone function
$f:[a,b]\to [a,b]$.   
Let us define $C(f)$ as follows
\begin{eqnarray}
C(f):=\sup_{x<y} C_{x,y}(f),
\end{eqnarray}
which is the
 maximal number of 
crossings over any interval $[x,y]\subset[a,b]$. We find the relationship between 
the maximal number of crossings $C(f)$  and 
topological entropy $h_{top}(f)$ in the asymptotic case.

\begin{prop}
Given a continuous function $f:[a,b]\to [a,b]$ which is piece-wise monotone, 
then 
\begin{eqnarray}
\lim_{k\to\infty}\sup_k
\frac{1}{k}\log_2 C(f^k)
=h_{top}(f).
\end{eqnarray}
\end{prop}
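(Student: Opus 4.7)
The plan is to establish the two matching inequalities separately. For the upper bound, observe that in the definition of $C_{x,y}(f^k)$ the pairs $(c_i,d_i)$ satisfy $f^k(c_i)=x$, $f^k(d_i)=y$ and are interleaved in strict order along $[a,b]$, so on each of the $c(f^k)$ maximal monotone laps of $f^k$ at most one such pair fits (equivalently, every subinterval $J\subset[a,b]$ with $f^k(J)=[x,y]$ is contained in a single lap, except on a measure-zero set of special $[x,y]$). Hence $C_{x,y}(f^k)\le c(f^k)$ for every $[x,y]$, so $C(f^k)\le c(f^k)$, and Lemma~\ref{lem:monh} yields
$$\limsup_{k\to\infty}\frac{1}{k}\log_2 C(f^k)\;\le\;\lim_{k\to\infty}\frac{1}{k}\log_2 c(f^k)\;=\;h_{top}(f).$$

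For the matching lower bound, I would invoke the Misiurewicz horseshoe characterization of topological entropy for continuous piecewise monotone interval maps: there exist positive integers $s_k$ such that $f^k$ admits an $s_k$-horseshoe, and $\tfrac{1}{k}\log_2 s_k\to h_{top}(f)$. Here an $s$-horseshoe means $s$ pairwise disjoint subintervals $J_1,\dots,J_s\subset[a,b]$ with $f^k(J_i)\supseteq J_1\cup\cdots\cup J_s$ for every $i$. Given such a horseshoe, the intermediate value theorem produces inside each $J_i$ a subinterval $L_i\subset J_i$ with $f^k(L_i)=J_1$; since the $J_i$ are pairwise disjoint, so are the $L_i$, showing that $[a,b]$ $f^k$-covers $J_1$ at least $s_k$ times. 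Therefore $C_{J_1}(f^k)\ge s_k$, hence $C(f^k)\ge s_k$, and
$$\limsup_{k\to\infty}\frac{1}{k}\log_2 C(f^k)\;\ge\;\lim_{k\to\infty}\frac{1}{k}\log_2 s_k\;=\;h_{top}(f).$$

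The main obstacle is the lower bound, whose substance is packaged by Misiurewicz's horseshoe theorem. A more hands-on but messier alternative would be to analyse the turning-point sequence $y_0,y_1,\ldots,y_{c(f^k)}$ of $f^k$ directly: letting $m^\ast$ denote the largest interior local-minimum value and $M^\ast$ the smallest interior local-maximum value, the case $m^\ast<M^\ast$ yields $C(f^k)\ge c(f^k)-2$ immediately by taking $[x,y]\subset(m^\ast,M^\ast)$ (since $y_i$ alternately lie in $(-\infty,m^\ast]$ and $[M^\ast,+\infty)$, every lap crosses $[x,y]$). The contrary case $m^\ast\ge M^\ast$ requires a delicate combinatorial argument to extract horseshoes from the sequence $(y_i)$, which is essentially the content Misiurewicz's theorem makes clean.
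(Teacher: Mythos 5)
Your proof is correct and follows essentially the same route as the paper: the upper bound via $C(f^k)\le c(f^k)$ on monotone laps combined with Lemma~\ref{lem:monh}, and the lower bound via Misiurewicz's horseshoe theorem applied to the iterates. The only (harmless) differences are that the paper treats the trivial case $h_{top}(f)=0$ separately before invoking horseshoes, and states the horseshoe theorem along a subsequence $k_n$ rather than for every $k$ --- which is all the $\limsup$ requires.
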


\begin{proof}

First,  since 
$f$ is piece-wise monotone, then 
there exists a finite partition of $[a,b]$
into subintervals such that 
$f$ is monotone on each subinterval. 
For any  subinterval where $f$ is monotone, 
there is at most one crossing over $[x,y]$. 
Thus
for any $x,y\in [a,b]$, we have 
\begin{eqnarray*}
C_{xy}(f)
\leq c(f),
\end{eqnarray*}
i.e., $C(f)\leq c(f)$.
Therefore, 
\begin{eqnarray*}
\lim_k \sup_k
\frac{1}{k}\log_2 C(f^k)
\leq \lim_k \frac{1}{k}\log_2 c(f^k)
=h_{top}(f).
\end{eqnarray*}

Besides,
if $h_{top}(f)=0$, then we have already got the result as
\begin{eqnarray*}
\lim_{k\to\infty}\sup_k
\frac{1}{k}\log_2 C(f^k)\geq 0
\end{eqnarray*}
Hence, we only need to consider the case where
$h_{top}(f)>0$. 
Let us introduce the concept called $s$-horeses \cite{Misiurewicz79hor,Misiurewicz80hor},
which is 
 an interval $J\subset [a,b]$  and  a partition $\mathcal{D}$ of 
$J$ into s subintervals such that 
 the closure of each element of $\mathcal{D}$
$ f$-covers $J$.
It  has been proved in  \cite{Misiurewicz79hor,Misiurewicz80hor} that there exist sequences $\set{k_n}^{\infty}_{n=1}$ and 
$\set{s_n}^{\infty}_{n=1}$ of positive integers such that 
$\lim_{n\to \infty}k_n=\infty$ and for each $n$, there exists $s_n$-horseshoes $(J_n, D_n)$ for $f^{k_n}$ such that 
\begin{eqnarray*}
\lim_{n\to \infty}\frac{1}{k_n}
\log_2 s_n
=h_{top}(f).
\end{eqnarray*}

 Based on the definition of $s_n$-horseshoe, for the map $f^{k_n}$, 
the closure of each  subinterval in $D_n$ $f^{k_n}$-covers $J_n$.
Thus, based on the definition of crossings, we have 
\begin{eqnarray*}
C(f^{k_n})\geq C_{J_n}(f^{k_n})\geq
s_n.
\end{eqnarray*}

Therefore
\begin{eqnarray*}
\lim_k \sup_k
\frac{1}{k}\log_2 C(f^k)
\geq \lim_{n\to \infty}\frac{1}{k_n}
\log_2 s_n
=h_{top}(f).
\end{eqnarray*}

\end{proof}

\subsection{Relationship between topological entropy and Lipschitz constant}

Let us consider the connection between Lipschitz constant and topological entropy.
Let us  denote the Lipschitz constant of $f$ by $L(f)$, that is 
\begin{eqnarray}
L(f)
=\inf\set{L\geq 0: |f(x)-f(y)|\leq L|x-y|, \forall x,y\in[a,b]}.
\end{eqnarray}

The connection between periods, the number of crossings and   Lipschitz constant has been discussed in \cite{Chatziafratis2020}. 
It has been proved that
if the Lipschitz constant matches the number of crossings, i.e.,  $C_{x y}(f^k)=L(f^k)$, then 
a $L^1$-separation  between $f^k$ and ReLU neural networks  can be obtained \cite{Chatziafratis2020}. 
Here we discuss the relationship between  Lipschitz constant and 
topological entropy.

\begin{prop}\label{prop:LipT}
Given a  continuous function $f:[a,b]\to [a,b]$ which piece-wise monotone, 
then 
\begin{eqnarray}
\lim_{k\to\infty}
\frac{1}{k}\log_2 L(f^k)
=\inf_k \frac{1}{k}\log_2 L(f^k),
\end{eqnarray}
and 
\begin{eqnarray}
\lim_{k\to \infty}
\max \set{0, \frac{1}{k}\log_2 L(f^k)}
\geq h_{top}(f).
\end{eqnarray}
\end{prop}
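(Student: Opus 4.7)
The plan is to treat the two assertions separately. For the first, I would use the standard Fekete subadditivity lemma applied to $a_k := \log_2 L(f^k)$. The key ingredient is submultiplicativity of the Lipschitz constant under composition: for any Lipschitz maps $g,h$,
\begin{eqnarray*}
|g(h(x))-g(h(y))| \leq L(g)\,|h(x)-h(y)| \leq L(g)L(h)\,|x-y|,
\end{eqnarray*}
so $L(f^{k+m}) \leq L(f^k)\,L(f^m)$, hence $a_{k+m}\leq a_k+a_m$. Since $f$ is piecewise monotone on a compact interval, each $a_k$ is finite (by analyzing each monotone branch of $f^k$, which is Lipschitz), so Fekete's lemma gives $\lim_k a_k/k = \inf_k a_k/k$, which is the first identity.

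For the second assertion I would exploit the horseshoe characterization of topological entropy already invoked in the previous subsection (Misiurewicz \cite{Misiurewicz79hor,Misiurewicz80hor}): there exist sequences $k_n\to\infty$ and $s_n\in\mathbb{N}$ and $s_n$-horseshoes $(J_n,\mathcal{D}_n)$ for $f^{k_n}$ with
\begin{eqnarray*}
\lim_{n\to\infty}\frac{1}{k_n}\log_2 s_n = h_{top}(f).
\end{eqnarray*}
The core step is the estimate $L(f^{k_n})\geq s_n$. Indeed, the horseshoe partitions $J_n$ into $s_n$ subintervals $J_n^{(1)},\dots,J_n^{(s_n)}$, each of which $f^{k_n}$-covers $J_n$, so $|f^{k_n}(J_n^{(i)})|\geq |J_n|$ for every $i$. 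By pigeonhole, some $J_n^{(i)}$ has length at most $|J_n|/s_n$, and on that piece there must be two points whose $f^{k_n}$-images differ by at least $|J_n|$; thus
\begin{eqnarray*}
L(f^{k_n}) \geq \frac{|J_n|}{|J_n|/s_n} = s_n.
\end{eqnarray*}
Taking logarithms, dividing by $k_n$, and using $\max\{0,\cdot\}\geq(\cdot)$ gives
\begin{eqnarray*}
\max\Bigl\{0,\tfrac{1}{k_n}\log_2 L(f^{k_n})\Bigr\} \geq \tfrac{1}{k_n}\log_2 s_n \longrightarrow h_{top}(f).
\end{eqnarray*}
Combining with part one (which guarantees the limit exists), the inequality $\lim_{k\to\infty}\max\{0,\tfrac{1}{k}\log_2 L(f^k)\}\geq h_{top}(f)$ follows.

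The only subtle point I anticipate is justifying the pigeonhole step cleanly: one must ensure that the two witness points in the short piece $J_n^{(i)}$ actually lie in $J_n^{(i)}$ (not just in its closure) and that the ratio of length expansion genuinely bounds the Lipschitz constant rather than an average slope. This is handled by taking preimages $x,y\in J_n^{(i)}$ of the two endpoints of $J_n$ under $f^{k_n}$, which exist by the covering property, so $|f^{k_n}(x)-f^{k_n}(y)|=|J_n|$ while $|x-y|\leq |J_n^{(i)}|\leq |J_n|/s_n$. The $\max\{0,\cdot\}$ in the statement handles the degenerate case $L(f^k)<1$, in which $f^k$ is a strict contraction and $h_{top}(f)=0$, so the inequality is trivial.
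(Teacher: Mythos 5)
Your first assertion is proved exactly as in the paper: submultiplicativity $L(f^{n+k})\leq L(f^n)L(f^k)$ makes $\log_2 L(f^k)$ subadditive, and Fekete's lemma gives the limit-equals-infimum identity. For the second assertion, however, you take a genuinely different route. The paper does not use horseshoes here at all; it invokes the variation characterization of entropy for piecewise monotone maps, $\lim_{k}\max\set{0,\frac{1}{k}\log_2 Var(f^k)}=h_{top}(f)$, together with the one-line bound $Var(f^k)\leq L(f^k)\,|b-a|$, and is done. Your argument instead runs through the Misiurewicz horseshoe theorem (which the paper uses only in the preceding proposition on crossing numbers), plus a pigeonhole step showing $L(f^{k_n})\geq s_n$: some element of the partition has length at most $|J_n|/s_n$ yet its closure $f^{k_n}$-covers $J_n$, forcing slope at least $s_n$. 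This is correct, and it even yields a concrete quantitative statement along the subsequence $k_n$; its costs are that the horseshoe theorem applies only when $h_{top}(f)>0$ (the zero-entropy case must be split off, which you only gesture at --- note the relevant degenerate case is $h_{top}(f)=0$, not $L(f^k)<1$, though then the bound is indeed trivial since $\max\set{0,\cdot}\geq 0$), and that you must use part one to pass from the subsequence $k_n$ to the full limit. The variation route avoids both issues. One small inaccuracy in your part one: a continuous monotone branch need not be Lipschitz (consider $x\mapsto\sqrt{x}$), so piecewise monotonicity does not by itself guarantee $L(f^k)<\infty$; the paper silently assumes finiteness as well, and the natural fix is to assume $L(f)<\infty$, whence $L(f^k)\leq L(f)^k<\infty$.
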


\begin{proof}
Based on the definition of Lipschitz constant, 
it is easy to see that 
\begin{eqnarray*}
|f^{n+k}(x)-f^{n+k}(y)|
&=&|f^{n}(f^{k}(x))-f^{n}(f^{k}(y))|\\
&\leq& L(f^n) |f^k(x)-f^k(y)|\\
&\leq&  L(f^n) L(f^k) |x-y|,
\end{eqnarray*}
for any integers $n,k$ and any $x,y\in [a,b]$.
Thus,
\begin{eqnarray}
L(f^{n+k})
\leq L(f^n) L(f^k).
\end{eqnarray}
i.e., $\log_2 L(f^{n+k})\leq\log_2 L(f^{n})+\log_2 L(f^{k})$. 
Hence $\set{\log_2 L(f^k)}_k$
 is a subadditive sequence. 
 Therefore, according to 
 Lemma \ref{lem:sub} in Appendix \ref{sec:top}, the limit
 \begin{eqnarray*}
 \lim_{k\to\infty}\frac{1}{k}\log_2 L(f^k)
 \end{eqnarray*}
exists and  
\begin{eqnarray*}
\lim_{k\to \infty}\frac{1}{k}\log_2 L(f^k)
=\inf_{k}\frac{1}{k}\log_2 L(f^k).
\end{eqnarray*}
Let us another characterization of  topological entropy of the function, which is  piece-wise monotone,  by 
variation  \cite{Alseda00} as follows 
\begin{eqnarray*}
\lim_{k\to \infty}
\max \set{0, \frac{1}{k}\log_2 Var(f^k)}
=h_{top}(f),
\end{eqnarray*}
where variation $Var(f)$ is defined to be 
the supremum of 
\begin{eqnarray*}
\sum^{t}_{i=1}
|f(x_{i+1})-f(x_i))|,
\end{eqnarray*}
over all finite sequences $x_1<x_2<....<x_t$ in $[a,b]$.
(See Lemma \ref{lem:var} in Appendix \ref{sec:top}.) 
Due to the definition 
of $Var(f)$, it is easy to see 

\begin{eqnarray*}
Var(f^k)
\leq L(f^k)|b-a|.
\end{eqnarray*}
Therefore, 
\begin{eqnarray*}
\lim_{k\to \infty}\frac{1}{k}\log_2 L(f^k)
\geq \lim_{k\to \infty}\frac{1}{k}\log_2 Var(f^k),
\end{eqnarray*}
which implies that
\begin{eqnarray*}
\lim_{k\to \infty}
\max \set{0, \frac{1}{k}\log_2 L(f^k)}
\geq h_{top}(f).
\end{eqnarray*}

\end{proof}

Based on Proposition \ref{prop:LipT}, if $L(f^k)\geq 1$, then $L(f^k)$ has 
an exponential lower bound with respect to the topological entropy of 
$f$, i.e., 
$L(f^{k})\geq 2^{kh_{top}(f)}$.

\section{Conclusion}
In this paper, we have investigated the relationship between topological entropy and 
expressivity of deep neural networks.
We provide a depth-width 
trade-offs based on the topological entropy from the theory of dynamic system.
For example, 
the topological entropy of  the ReLU network with $l$ layers and $m$ nodes per layer is upper bounded by $O(l\log m)$. 
Besides, we show that the size of the neural network required to represent a given function has an exponential lower bound with respect to
the topological entropy of the function, where
the exponential lower bound holds for
$L^{\infty}$-error approximation. 
For example, if we would like to represent the function $f$ by ReLU network with $l$ layers and $m$ nodes per layer, 
then the width $m$ has a lower bound $\exp(\Omega(h_{top}(f)/l))$.
Moreover, we discuss the relationship between 
topological entropy, periods, Lipschitz constant and  the number of crossings,
especially the relationship in the asymptotic case. 

Note that one key step to get exponential lower bound  on the size of neural networks for  $L^{\infty}$-error approximation 
is the lower semi-continuity of 
topological entropy with respect to 
$L^{\infty}$ norm. 
If the lower semi-continuity of 
topological entropy with respect to $L^p$ norm (e.g., $L^1$ norm) holds, it  
will lead to exponential lower bound  (with respect to topological entropy) 
for $L^p$-error approximation. 
Further studies on 
the (semi-)continuity of topological entropy are desired.
Besides, 
it would be quite interesting to study the
relationship between topological entropy and 
VC dimension. We leave it for further study.

\section{Acknowledgments}
K. B. thanks Arthur Jaffe for the help and support and thanks Weichen Gu for the discussion on topological entropy. 
K. B. acknowledges the support of
 ARO Grants W911NF-19-1-0302 and
W911NF-20-1-0082.

\begin{appendix}

\section{Properties of topological entropy}\label{sec:top}

Here, we list some useful facts about topological entropy. More information can be found in
\cite{Alseda00}.

\begin{lem}\cite{Alseda00}\label{lem:k}
Given a compact Hausdorff space $X$ and a continuous function $f: X\to X$,  
topological entropy of $f$ and $f^{k}$ has the following relatiobship

\begin{eqnarray}
h_{top}(f^k)
=kh_{top}(f),
\end{eqnarray}
for any integer $k\geq 0$.
\end{lem}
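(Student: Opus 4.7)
The plan is to prove both inequalities $h_{top}(f^k)\leq k\,h_{top}(f)$ and $h_{top}(f^k)\geq k\,h_{top}(f)$ directly from the Adler--Konheim--McAndrew definition, by manipulating the joined covers $\mathcal{A}^n_f=\bigvee_{i=0}^{n-1}f^{-i}(\mathcal{A})$. The case $k=0$ is handled separately: since $f^0=\mathrm{id}_X$, one has $\mathcal{A}^n_{\mathrm{id}}=\mathcal{A}$, so $h_{top}(\mathrm{id},\mathcal{A})=\lim_n\frac{1}{n}\log_2\mathcal{N}(\mathcal{A})=0$, whence $h_{top}(f^0)=0=0\cdot h_{top}(f)$. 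From now on assume $k\geq 1$.

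For the upper bound, let $\mathcal{C}$ be any open cover of $X$. The key observation is that each set appearing in $\mathcal{C}^n_{f^k}=\bigvee_{j=0}^{n-1}f^{-kj}(\mathcal{C})$ also appears as an intersection of sets indexed by $\{0,k,2k,\ldots,(n-1)k\}\subset\{0,1,\ldots,nk-1\}$, so every member of $\mathcal{C}^{nk}_f$ is contained in some member of $\mathcal{C}^n_{f^k}$. Hence any subcover of $\mathcal{C}^{nk}_f$ gives rise, element by element, to a subcover of $\mathcal{C}^n_{f^k}$ of the same or smaller cardinality, which yields the inequality $\mathcal{N}(\mathcal{C}^n_{f^k})\leq \mathcal{N}(\mathcal{C}^{nk}_f)$. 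Dividing by $n$, taking $n\to\infty$, and noting that the right side is $k\cdot\frac{1}{nk}\log_2\mathcal{N}(\mathcal{C}^{nk}_f)\to k\,h_{top}(f,\mathcal{C})$, we get $h_{top}(f^k,\mathcal{C})\leq k\,h_{top}(f,\mathcal{C})\leq k\,h_{top}(f)$. Taking the supremum over $\mathcal{C}$ yields $h_{top}(f^k)\leq k\,h_{top}(f)$.

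For the lower bound, fix an open cover $\mathcal{A}$ of $X$ and set $\mathcal{B}:=\mathcal{A}^k_f=\bigvee_{i=0}^{k-1}f^{-i}(\mathcal{A})$, which is itself an open cover. The central computation is that, using $(f^k)^{-j}=f^{-kj}$ and commuting preimages with the join operation,
\begin{eqnarray*}
\mathcal{B}^n_{f^k}
=\bigvee_{j=0}^{n-1}f^{-kj}(\mathcal{B})
=\bigvee_{j=0}^{n-1}\bigvee_{i=0}^{k-1}f^{-(kj+i)}(\mathcal{A})
=\bigvee_{m=0}^{nk-1}f^{-m}(\mathcal{A})
=\mathcal{A}^{nk}_f.
\end{eqnarray*}
Therefore $\mathcal{N}(\mathcal{B}^n_{f^k})=\mathcal{N}(\mathcal{A}^{nk}_f)$, and dividing by $n$ gives $h_{top}(f^k,\mathcal{B})=k\,h_{top}(f,\mathcal{A})$. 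Taking the supremum over $\mathcal{A}$ on the right (the corresponding $\mathcal{B}$'s form a subfamily of all open covers) yields $h_{top}(f^k)\geq k\,h_{top}(f)$. Combining the two inequalities proves the lemma.

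The routine obstacle is purely bookkeeping: one has to verify carefully that preimages and joins interchange and that the index set $\{kj+i:0\leq j<n,\,0\leq i<k\}$ is exactly $\{0,1,\ldots,nk-1\}$, so that the two chains of covers line up. The genuinely subtle point worth writing out is the refinement argument $\mathcal{N}(\mathcal{C}^n_{f^k})\leq\mathcal{N}(\mathcal{C}^{nk}_f)$ in the upper bound, because the definition of $\mathcal{N}$ in the paper restricts to subcovers of the given cover rather than arbitrary refinements; one should therefore map each element of a minimal subcover of $\mathcal{C}^{nk}_f$ to an element of $\mathcal{C}^n_{f^k}$ containing it, and check that the image is still a cover of $X$.
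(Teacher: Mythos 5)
The paper does not prove this lemma; it is quoted from Alsed\`a--Llibre--Misiurewicz as a known fact, so there is no in-paper argument to compare against. Your proof is correct and is the standard one from the Adler--Konheim--McAndrew definition: the refinement inequality $\mathcal{N}(\mathcal{C}^n_{f^k})\leq\mathcal{N}(\mathcal{C}^{nk}_f)$ gives $h_{top}(f^k)\leq k\,h_{top}(f)$, and the identity $(\mathcal{A}^k_f)^n_{f^k}=\mathcal{A}^{nk}_f$ gives the reverse inequality. The only cosmetic slip is in the $k=0$ case: $\mathcal{A}^n_{\mathrm{id}}$ is not literally equal to $\mathcal{A}$ (it consists of all nonempty $n$-fold intersections of members of $\mathcal{A}$), but it both refines and is refined by $\mathcal{A}$, so $\mathcal{N}(\mathcal{A}^n_{\mathrm{id}})=\mathcal{N}(\mathcal{A})$ and the conclusion $h_{top}(f^0)=0$ stands.
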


\begin{prop}\cite{Alseda00}
Given compact Hausdorff spaces $X,Y$,
$f:X\to X, g:Y\to Y, \phi:X\to Y$ are continuous maps such that the following diagram
 \begin{equation}
\begin{array}{cccc}
X&\stackrel{f}{\longrightarrow}&X\\
\small{\varphi}\downarrow~&~~~~~~~~~~~~~~&~\downarrow\small{\varphi}\\
Y&\stackrel{g}{\longrightarrow}&Y
\end{array}
\end{equation}
commutes, i.e., $\varphi\circ f=g\circ \varphi$, we have  the following properties 

(a) if $\varphi$ is injective, then $h_{top}(f)\leq h_{top}(g)$,

(b) if $\varphi$ is surjective, then $h_{top}(f)\geq h_{top}(g)$,

(c) if $\varphi$ is bijective, then $h_{top}(f)= h_{top}(g)$. And $\varphi$ is called a conjugacy between 
$f$ and $g$ (or $f$ and $g$ are conjugate).

\end{prop}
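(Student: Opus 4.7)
The plan is to work directly from the open-cover definition of topological entropy. Commutativity of the diagram yields $\varphi \circ f^i = g^i \circ \varphi$ for every $i \geq 0$, which at the level of preimages gives the basic identity $f^{-i}(\varphi^{-1}(A)) = \varphi^{-1}(g^{-i}(A))$ for any $A \subset Y$. Claim (c) will follow immediately from (a) and (b), so the work is to establish (a) and (b). I would handle (b) first as the cleanest case, and then reduce (a) to a combination of a conjugacy argument and an "entropy of a restriction to a closed invariant subset" estimate.

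For (b), given any open cover $\mathcal{A}$ of $Y$, continuity of $\varphi$ makes the pullback $\varphi^{-1}(\mathcal{A}) := \{\varphi^{-1}(A) : A \in \mathcal{A}\}$ an open cover of $X$, and the commutation identity yields $(\varphi^{-1}(\mathcal{A}))^n_f = \varphi^{-1}(\mathcal{A}^n_g)$ element by element. Surjectivity of $\varphi$ ensures that whenever a pulled-back subfamily $\varphi^{-1}(\mathcal{B}) \subset \varphi^{-1}(\mathcal{A}^n_g)$ covers $X$, the corresponding $\mathcal{B} \subset \mathcal{A}^n_g$ covers $Y$; hence $\mathcal{N}((\varphi^{-1}(\mathcal{A}))^n_f) \geq \mathcal{N}(\mathcal{A}^n_g)$. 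Dividing by $n$, taking $n \to \infty$, and supping over $\mathcal{A}$ gives $h_{top}(f) \geq h_{top}(g)$.

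For (a) the idea is to identify $X$ with a closed invariant subspace of $Y$. Since $X$ is compact and $Y$ is Hausdorff, the continuous injection $\varphi$ is a homeomorphism onto its image, and $K := \varphi(X)$ is compact and therefore closed in $Y$; commutativity together with surjectivity of $\varphi$ onto $K$ forces $g(K) = \varphi(f(X)) \subset K$, so $g|_K$ is well-defined. The bijective intertwiner $\varphi : (X,f) \to (K, g|_K)$ is a topological conjugacy, and the argument of (b) applied to both $\varphi$ and $\varphi^{-1}$ gives $h_{top}(f) = h_{top}(g|_K)$. It then remains to show $h_{top}(g|_K) \leq h_{top}(g)$: given a subspace-open cover $\mathcal{U}$ of $K$, write $\mathcal{U} = \{V \cap K : V \in \mathcal{V}\}$ for some open family $\mathcal{V}$ in $Y$ and form $\widetilde{\mathcal{V}} = \mathcal{V} \cup \{Y \setminus K\}$, which is an open cover of $Y$ because $K$ is closed. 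By invariance $g^{-i}(Y \setminus K) \cap K = \emptyset$, so any atom of $\widetilde{\mathcal{V}}^n_g$ meeting $K$ must come from $\mathcal{V}$ alone and restricts to an atom of $\mathcal{U}^n_{g|_K}$; restricting a minimal subcover of $\widetilde{\mathcal{V}}^n_g$ of $Y$ to $K$ therefore yields a cover of $K$ of no larger cardinality. Thus $\mathcal{N}(\mathcal{U}^n_{g|_K}) \leq \mathcal{N}(\widetilde{\mathcal{V}}^n_g)$, and the entropy inequality follows upon dividing by $n$ and supping over $\mathcal{U}$.

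The main obstacle is precisely this last extension step in (a): the reduction $h_{top}(g|_K) \leq h_{top}(g)$ cannot be packaged as a direct pullback of covers and relies essentially on the invariance $g(K) \subset K$ to force $g^{-i}(Y \setminus K) \cap K = \emptyset$. Everything else is bookkeeping with the commutation identity, and (c) is then automatic from (a) and (b).
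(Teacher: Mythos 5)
The paper does not prove this proposition; it is stated in the appendix as a background fact imported from [ALM00], so there is no in-paper argument to compare against. Your proof is correct and is essentially the standard textbook argument: the pullback-of-covers computation for (b) (where surjectivity is exactly what preserves both the nonemptiness of atoms and the covering property), and for (a) the reduction via compactness-plus-Hausdorff to a conjugacy onto the closed invariant set $K=\varphi(X)$ followed by the monotonicity of entropy under restriction to closed invariant subsets, with the cover $\mathcal{V}\cup\{Y\setminus K\}$ and the observation $g^{-i}(Y\setminus K)\cap K=\emptyset$ doing the work. No gaps.
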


If $X=[a, b]$,  
then we have the following 
characterization of topological entropy for a continuous function $f:[a,b]\to [a,b]$.

\begin{Def}[\cite{Misiurewicz79hor,Misiurewicz80hor}]
Given a continuous function $f:[a,b]\to [a,b]$,  an s-horseshoe with $s\geq 2$ for $f$ is 
$(J, \mathcal{D})$, where
 $J\subset [a,b]$ is an interval and  $\mathcal{D}$ is a partition
$J$ into s subintervals such that 
 the closure of each element of $\mathcal{D}$
$ f$-covers $J$.
\end{Def}

\begin{lem}[\cite{Misiurewicz79hor,Misiurewicz80hor}]
Given a continuous function $f:[a,b]\to [a,b]$ with positive entropy, then there exist sequences $\set{k_n}^{\infty}_{n=1}$ and 
$\set{s_n}^{\infty}_{n=1}$ of positive integers such that 
$\lim_{n\to \infty}k_n=\infty$, for each $n$ the map $f^{k_n}$ has an $s_n$-horseshoe and
\begin{eqnarray}
\lim_{n\to \infty}
\frac{1}{k_n}
\log s_n=h_{top}(f).
\end{eqnarray}
\end{lem}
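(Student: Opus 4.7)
The plan is to prove the equality by establishing two matching asymptotic bounds, one from the piece-wise monotonicity of $f$ and one from the existence of horseshoes for iterates of $f$. Throughout I will exploit the fact that, by Lemma \ref{lem:monh}, $h_{top}(f)=\lim_{k}\frac{1}{k}\log_2 c(f^k)=\inf_k \frac{1}{k}\log_2 c(f^k)$, so the number of monotonicity intervals gives a clean handle on $h_{top}$ whenever it can be compared to $C(f^k)$.

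For the upper bound, I would first show $C(f)\leq c(f)$. Indeed, each maximal subinterval on which $f$ is monotone contributes at most one preimage of any given value $x$ and one preimage of $y$, paired by the monotone branch, so at most one crossing of $[x,y]$ can be contained in that monotonicity piece. Summing over the $c(f)$ pieces gives $C_{x,y}(f)\leq c(f)$ for every $[x,y]\subset[a,b]$, hence $C(f)\leq c(f)$. Applying this to each iterate and using Lemma \ref{lem:monh} yields
\begin{eqnarray*}
\limsup_{k\to\infty}\frac{1}{k}\log_2 C(f^k)
\leq \lim_{k\to\infty}\frac{1}{k}\log_2 c(f^k)
= h_{top}(f).
\end{eqnarray*}

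For the lower bound, the case $h_{top}(f)=0$ is immediate since $C(f^k)\geq 1$. When $h_{top}(f)>0$, I would invoke Misiurewicz's horseshoe characterization recorded in the appendix (Lemma stated in Section \ref{sec:top}): there exist sequences of integers $k_n\to\infty$ and $s_n\geq 2$ such that $f^{k_n}$ admits an $s_n$-horseshoe $(J_n,\mathcal{D}_n)$ with $\frac{1}{k_n}\log_2 s_n\to h_{top}(f)$. Write $J_n=[x_n,y_n]$. By the definition of an $s_n$-horseshoe, the closure of each of the $s_n$ subintervals in $\mathcal{D}_n$ $f^{k_n}$-covers $J_n$, so by the equivalence between $f$-covering and crossings stated after the definition of $f$-covering, $[a,b]$ $f^{k_n}$-covers $[x_n,y_n]$ at least $s_n$ times. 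This gives $C(f^{k_n})\geq C_{x_n,y_n}(f^{k_n})\geq s_n$, and therefore
\begin{eqnarray*}
\limsup_{k\to\infty}\frac{1}{k}\log_2 C(f^k)
\geq \lim_{n\to\infty}\frac{1}{k_n}\log_2 s_n
= h_{top}(f).
\end{eqnarray*}

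The main obstacle I anticipate is not conceptual but bookkeeping: one has to check cleanly that the two ``piece-counting'' notions, $c(\cdot)$ for monotonicity intervals and $C_{x,y}(\cdot)$ for crossings, behave well enough on $[a,b]$ to give the pairing of one crossing per monotone branch, and that an $s_n$-horseshoe in $J_n$ indeed translates directly into $s_n$ disjoint subintervals on which $f^{k_n}$ hits the endpoints $x_n$ and $y_n$ in sequence. The only place where some care is needed is the latter: the horseshoe gives that each subinterval $f^{k_n}$-covers all of $J_n$, which by the intermediate value theorem forces preimages $c_i$ of $x_n$ and $d_i$ of $y_n$ in a common branch, yielding the $C_{x_n,y_n}(f^{k_n})\geq s_n$ estimate and completing the argument.
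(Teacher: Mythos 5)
There is a genuine gap, and it is fatal: your argument is circular. The statement you were asked to prove \emph{is} Misiurewicz's horseshoe lemma — the assertion that a positive-entropy interval map admits iterates $f^{k_n}$ with $s_n$-horseshoes whose growth rate $\frac{1}{k_n}\log s_n$ converges to $h_{top}(f)$. In your ``lower bound'' step you write that you would ``invoke Misiurewicz's horseshoe characterization recorded in the appendix,'' i.e.\ you cite as a known tool the exact statement under consideration. What your proposal actually establishes (modulo that citation) is a different result in the paper, namely the Proposition of Section 5.1 relating topological entropy to the asymptotic number of crossings, $\lim\sup_k \frac{1}{k}\log_2 C(f^k)=h_{top}(f)$; indeed your two-sided argument ($C(f)\leq c(f)$ via Lemma \ref{lem:monh} for the upper bound, horseshoes giving $C(f^{k_n})\geq s_n$ for the lower bound) reproduces the paper's own proof of that proposition almost verbatim. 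It does not prove, and cannot prove, the horseshoe lemma itself.

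For the record, the paper does not prove this lemma either: it is stated with a citation to \cite{Misiurewicz79hor,Misiurewicz80hor}, where the real work is done. A genuine proof requires constructing horseshoes from positive entropy — roughly, one approximates the entropy by the growth rate of monotone branches (or of covering relations among a finite family of intervals), extracts from long chains of coverings a single interval $J$ whose closure is covered many times by disjoint subintervals under a fixed iterate, and controls the rate so that $\frac{1}{k_n}\log s_n \to h_{top}(f)$ rather than merely staying positive. None of that combinatorial construction appears in your proposal; the ``bookkeeping'' you flag as the main obstacle (one crossing per monotone branch, horseshoe implies many crossings) is the easy translation layer, not the substance of the theorem. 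If your goal was the crossings proposition, your write-up is essentially correct and matches the paper; as a proof of the stated lemma, it assumes its conclusion.
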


\begin{Def}\cite{Alseda00}
Given a continuous function $f:[a,b]\to [a,b]$, the variation $Var(f)$ is defined to be 
the supremum of 
\begin{eqnarray}
\sum^{t}_{i=1}
|f(x_{i+1})-f(x_i))|
\end{eqnarray}
over all finite sequences $x_1<x_2<....<x_t$ in $[a,b]$.
\end{Def}

\begin{lem}\cite{Alseda00}\label{lem:var}
Given a continuous function $f:[a,b]\to [a,b]$ which piece-wise monotone, then we have
\begin{eqnarray}
\lim_{k\to \infty}
\max \set{0, \frac{1}{k}\log_2 Var(f^k)}
=h_{top}(f).
\end{eqnarray}
\end{lem}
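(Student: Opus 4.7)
The plan is to establish the two inequalities $\limsup_{k\to\infty} \tfrac{1}{k}\log_2 Var(f^k) \leq h_{top}(f)$ and $\liminf_{k\to\infty} \tfrac{1}{k}\log_2 Var(f^k) \geq h_{top}(f)$; together they give existence of the limit and its value. The $\max\{0,\cdot\}$ in the statement matters only in the degenerate case $h_{top}(f)=0$, in which $Var(f^k)$ could in principle stay below one. For the upper bound I would argue directly from the piecewise-monotone structure: on each of the $c(f^k)$ monotonicity subintervals $f^k$ takes values in $[a,b]$, so the variation contribution is at most $b-a$. Hence $Var(f^k)\leq c(f^k)(b-a)$, and taking logarithms and invoking Lemma~\ref{lem:monh} gives $\limsup_k \tfrac{1}{k}\log_2 Var(f^k) \leq h_{top}(f)$, which is inherited by the $\max$.

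For the lower bound, assume $h_{top}(f)>0$ (otherwise there is nothing to prove). I would invoke the horseshoe characterization stated in the appendix: for any $\epsilon>0$ there exist $k_0$ and an $s_0$-horseshoe $(J,\mathcal{D})$ for $f^{k_0}$ with $\tfrac{1}{k_0}\log_2 s_0 \geq h_{top}(f)-\epsilon$. Iterating symbolically, for each $(i_1,\ldots,i_m)\in\{1,\ldots,s_0\}^m$ the standard nested construction produces a subinterval $I_{i_1,\ldots,i_m}\subseteq J_{i_1}$ with $f^{jk_0}(I_{i_1,\ldots,i_m})\subseteq J_{i_{j+1}}$ for $j<m$ and $f^{mk_0}(I_{i_1,\ldots,i_m})\supseteq J$. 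Thus $f^{mk_0}$ has an $s_0^m$-horseshoe on $J$, and since it surjects each of the $s_0^m$ subintervals onto $J$, I obtain $Var(f^{mk_0})\geq s_0^m|J|$.

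It remains to transfer this subsequence lower bound to all large $k$; this is the main technical obstacle. I would use the submultiplicative bound $Var(f^{n+k})\leq c(f^n)\,Var(f^k)$, which follows because on each of the $c(f^n)$ monotonicity intervals $f^n$ surjects onto some subinterval of $[a,b]$, on which $f^k$ contributes variation at most $Var(f^k)$. Writing an arbitrary large $K$ as $qk_0+r'$ with $0\leq r'<k_0$ and setting $K':=(q+1)k_0$, this rearranges to $Var(f^K)\geq Var(f^{K'})/c(f^{k_0-r'})\geq s_0^{q+1}|J|/M$, where $M:=\max_{0\leq j\leq k_0} c(f^j)$ is a finite constant depending only on $k_0$. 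Dividing by $K$ and letting $K\to\infty$ yields $\liminf_K \tfrac{1}{K}\log_2 Var(f^K)\geq \tfrac{1}{k_0}\log_2 s_0 \geq h_{top}(f)-\epsilon$, and letting $\epsilon\downarrow 0$ completes the argument.
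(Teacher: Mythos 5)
Your proof is correct, but there is no proof in the paper to compare it against: Lemma~\ref{lem:var} is stated in Appendix~\ref{sec:top} as a known fact, quoted from \cite{Alseda00} without argument. Your derivation is a legitimate, self-contained one, and notably it uses only ingredients the paper already records: the Misiurewicz--Szlenk formula $h_{top}(f)=\lim_k \frac{1}{k}\log_2 c(f^k)$ (Lemma~\ref{lem:monh}) gives your upper bound via $Var(f^k)\leq c(f^k)(b-a)$, since a monotone branch contributes variation at most $b-a$; and the horseshoe lemma of Misiurewicz (also in Appendix~\ref{sec:top}) gives your lower bound along the subsequence $\{mk_0\}$ via the oscillation estimate $Var(f^{mk_0})\geq s_0^m\abs{J}$, the point being that the $s_0^m$ itinerary intervals have pairwise disjoint interiors and each is mapped onto a set containing $J$, so their oscillations sum below the total variation. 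The two steps where a blind attempt could stumble are both handled soundly: the iteration of an $s_0$-horseshoe for $f^{k_0}$ to an $s_0^m$-horseshoe for $f^{mk_0}$ is indeed the standard nested construction, and your submultiplicative inequality $Var(f^{n+k})\leq c(f^n)\,Var(f^k)$ (valid because precomposition with a monotone surjection of intervals preserves variation) is exactly the right device to upgrade the subsequence bound to a genuine $\liminf$, using that $M=\max_{0\leq j\leq k_0}c(f^j)$ is finite for piecewise monotone $f$. Two cosmetic remarks: the intervals you denote $J_{i_1}$ should be the partition elements $D_{i_1}\in\mathcal{D}$ of the horseshoe, and in the case $h_{top}(f)=0$ you should state explicitly that the trivial bound $\max\{0,\cdot\}\geq 0$ together with your upper bound forces the limit of $\max\{0,\frac{1}{k}\log_2 Var(f^k)\}$ to exist and equal $0$; both are trivial to fix.
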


\begin{lem}\label{lem:slope}\cite{Misiurewicz80}
Given a continuous function $f:[a,b]\to [a,b]$, which is piece-wise monotone, if $f$ is
affine with the slope coefficient of absolute value s on each piece of monotonicity, then 
\begin{eqnarray}
h_{top}(f)=
\max\set{0,\log_2 s}.
\end{eqnarray}
\end{lem}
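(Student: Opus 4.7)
The plan is to route the computation through the variation characterization of topological entropy (Lemma \ref{lem:var}), which states that for piecewise monotone $f\colon[a,b]\to[a,b]$,
\begin{eqnarray*}
h_{top}(f)=\lim_{k\to\infty}\max\set{0,\tfrac{1}{k}\log_2 Var(f^k)}.
\end{eqnarray*}
So it suffices to identify $Var(f^k)$ explicitly under the constant-absolute-slope hypothesis and then read off the limit.

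The key structural observation is that $f^k$ is again piecewise affine, with constant absolute slope $s^k$ on each of its affine pieces. I would prove this by induction on $k$. For the inductive step, take the common refinement of the monotonicity partition of $f^{k-1}$ with the preimage under $f^{k-1}$ of the monotonicity partition of $f$. On every cell of this refined partition, $f^{k-1}$ is affine with absolute slope $s^{k-1}$ and its image lies in a single piece where $f$ is affine with absolute slope $s$, so by the chain rule for linear maps $f^k=f\circ f^{k-1}$ is affine with absolute slope $s^{k-1}\cdot s=s^k$ on that cell.

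Once this is in hand, the computation of $Var(f^k)$ is immediate: take as a defining partition for the variation the endpoints of the affine cells of $f^k$. Since $f^k$ is monotone (in fact affine with slope $\pm s^k$) on each cell, the contribution of a cell of length $\ell$ is exactly $s^k\ell$. Summing over cells, whose lengths partition $[a,b]$, gives
\begin{eqnarray*}
Var(f^k)=s^k(b-a).
\end{eqnarray*}
(This equality, not merely an inequality, is why the constant-slope hypothesis is essential; without it one would only get an upper bound.) Plugging into Lemma \ref{lem:var},
\begin{eqnarray*}
h_{top}(f)=\lim_{k\to\infty}\max\set{0,\tfrac{1}{k}\log_2\bigl(s^k(b-a)\bigr)}=\max\set{0,\log_2 s}.
\end{eqnarray*}

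The only mildly delicate step is the inductive slope calculation, because one must be careful to refine partitions along preimages so that each inductive cell really is mapped into a single affine piece of $f$; once the refinement is set up, the rest of the argument is essentially arithmetic. No case split on $s\leq 1$ versus $s>1$ is needed, because the $\max\set{0,\cdot}$ in Lemma \ref{lem:var} absorbs the contracting regime automatically.
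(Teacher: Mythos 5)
Your proof is correct. Note that the paper itself offers no proof of this lemma: it is quoted from Misiurewicz--Szlenk \cite{Misiurewicz80}, so there is nothing in the text to compare against line by line. Your derivation is the standard one and is self-contained given the other results quoted in the same appendix: the induction showing that $f^k$ is piecewise affine with constant absolute slope $s^k$ (with the refinement by $f^{k-1}$-preimages of the break points of $f$, which is exactly the point that needs care), the exact identity $Var(f^k)=s^k(b-a)$ by additivity of variation over the affine cells, and then Lemma \ref{lem:var} to read off $h_{top}(f)=\max\{0,\log_2 s\}$. Your remark that the constant-slope hypothesis upgrades the inequality $Var(f^k)\leq L(f^k)(b-a)$ to an equality is apt --- it is precisely what fails for general piecewise monotone maps, and it is also why the variation characterization is the right tool here rather than the lap-number characterization (Lemma \ref{lem:monh}): lap counting gives the lower bound $c(f^k)\geq s^k$ easily (each lap has length at most $(b-a)/s^k$) but not the matching upper bound, whereas the variation computation gives both directions at once. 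One pedantic caveat: when $s\leq 1$ your final limit uses continuity of $x\mapsto\max\{0,x\}$, and when $s=0$ the map is globally constant so $Var(f^k)=0$; both edge cases are harmless but worth a sentence if this were written out in full.
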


\begin{lem}\label{lem:sub}\cite{Alseda00}
Given a subadditive sequence $\set{a_k}^{\infty}_{k=1}$(i.e. $a_{n+k}\leq a_n+a_k$), we have 
\begin{eqnarray}
\lim_{k\to \infty}\frac{a_k}{k}
\end{eqnarray}
exists and is equal to $\inf_k\frac{a_k}{k} $.
\end{lem}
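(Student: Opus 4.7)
The statement to prove is Fekete's classical subadditive lemma: for any sequence $\set{a_k}^{\infty}_{k=1}$ satisfying $a_{n+k}\leq a_n+a_k$, the limit $\lim_{k\to\infty}\frac{a_k}{k}$ exists and equals $\inf_k\frac{a_k}{k}$. The plan is to set $L:=\inf_k\frac{a_k}{k}$ (allowing $L=-\infty$) and show that the upper limit is at most $L$, while the lower limit is automatically at least $L$ since every term of the sequence $\frac{a_k}{k}$ dominates the infimum.

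First I would dispose of the easy direction: by the definition of infimum, $\frac{a_k}{k}\geq L$ for every $k$, so $\liminf_{k\to\infty}\frac{a_k}{k}\geq L$. The substance of the proof is the opposite inequality $\limsup_{k\to\infty}\frac{a_k}{k}\leq L$. Fix $\epsilon>0$ (or, if $L=-\infty$, any target $M$) and choose an index $m$ for which $\frac{a_m}{m}< L+\epsilon$ (respectively $<M$). The key step is the division argument: for any $k\geq m$, write $k=qm+r$ with $0\leq r<m$, and iterate subadditivity to obtain
\begin{eqnarray*}
a_k = a_{qm+r}\leq q\, a_m + a_r,
\end{eqnarray*}
where the case $r=0$ is handled by omitting the $a_r$ term. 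Dividing by $k$ yields
\begin{eqnarray*}
\frac{a_k}{k}\leq \frac{qm}{k}\cdot\frac{a_m}{m}+\frac{a_r}{k}.
\end{eqnarray*}

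The final step is to take $k\to\infty$ with $m$ fixed. Since $0\leq r<m$, the remainders $\set{a_r}$ range over the finite set $\set{a_0,\dots,a_{m-1}}$ and are therefore bounded; hence $\frac{a_r}{k}\to 0$. Likewise $\frac{qm}{k}=1-\frac{r}{k}\to 1$. Consequently
\begin{eqnarray*}
\limsup_{k\to\infty}\frac{a_k}{k}\leq \frac{a_m}{m}< L+\epsilon,
\end{eqnarray*}
and sending $\epsilon\downarrow 0$ (or $M\downarrow -\infty$) gives $\limsup_{k\to\infty}\frac{a_k}{k}\leq L$. Combined with the infimum bound, both $\liminf$ and $\limsup$ coincide with $L$, so the limit exists and equals $\inf_k\frac{a_k}{k}$.

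The only genuine subtlety is the Euclidean-division trick $k=qm+r$ together with the boundedness of $\set{a_0,\dots,a_{m-1}}$; I do not expect this to be a serious obstacle, since no hypothesis beyond subadditivity is used. A minor care-point is the case $L=-\infty$, which is handled verbatim by replacing $L+\epsilon$ with an arbitrary real threshold $M$ and arguing that $\limsup_{k\to\infty}\frac{a_k}{k}$ lies below every real number.
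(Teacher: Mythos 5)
Your proof is correct and complete: it is the classical Fekete subadditive-lemma argument (Euclidean division $k=qm+r$, iterated subadditivity $a_k\leq q\,a_m+a_r$, boundedness of the finitely many remainder terms, then $k\to\infty$ and $\epsilon\downarrow 0$), with the $L=-\infty$ case handled properly via an arbitrary threshold. The paper itself gives no proof of this lemma --- it is stated with a citation to \cite{Alseda00} --- so your argument simply supplies the standard proof that the reference contains; there is no internal proof to diverge from. One trivial indexing remark: since the sequence starts at $k=1$, the remainders with $r\geq 1$ range over $\{a_1,\dots,a_{m-1}\}$ rather than $\{a_0,\dots,a_{m-1}\}$, which is consistent with your convention of omitting the $a_r$ term when $r=0$.
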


\end{appendix}

\end{document}